\documentclass[10pt,journal,compsoc]{IEEEtran}

\usepackage{hyperref}

\usepackage{multirow}
\usepackage{arydshln}
\usepackage{xcolor}
\usepackage[flushleft]{threeparttable}

\usepackage{subfig}
\usepackage{graphicx}

\usepackage{algorithm}
\usepackage{algpseudocode}
\newcommand{\multilinestate}[1]{%
  \parbox[t]{\linewidth}{\raggedright\hangindent=\algorithmicindent\hangafter=1
    \strut#1\strut}}
\algnewcommand{\Inputs}[1]{%
  \State \textbf{Inputs:}
  \State \hspace*{\algorithmicindent}\multilinestate{#1}
}
\algnewcommand{\Outputs}[1]{%
  \State \textbf{Outputs:}
  \Statex \hspace*{\algorithmicindent}\multilinestate{#1}
}
\algnewcommand{\Initialize}[1]{%
  \State \textbf{Initialize:}
  \Statex \hspace*{\algorithmicindent}\multilinestate{#1}
}
\usepackage{float}
\newfloat{algorithm}{t}{lop}

\usepackage{amsthm}
\usepackage{amssymb}
\newtheorem{theorem}{Theorem}
\usepackage{bm}
\newtheorem{prop}{Proposition}

\newcommand{\argmin}{\mathop{\rm argmin}\limits}
\usepackage{cite}

\hyphenation{op-tical net-works semi-conduc-tor}

\begin{document}
\title{AAA: Adaptive Aggregation of Arbitrary Online Trackers with Theoretical Performance Guarantee}
\author{Heon~Song,~Daiki~Suehiro,~and~Seiichi~Uchida
\IEEEcompsocitemizethanks{
\IEEEcompsocthanksitem H. Song and D. Suehiro are with the Department of Advanced Information Technology, Kyushu University, Fukuoka, Japan, and with AIP, RIKEN, Tokyo, 103–0027 Japan.\protect\\
E-mail: \{heon.song@human., suehiro@\}ait.kyushu-u.ac.jp
\IEEEcompsocthanksitem S. Uchida is with the Department of Advanced Information Technology, Kyushu University, Fukuoka, Japan.\protect\\
E-mail: uchida@ait.kyushu-u.ac.jp
}
}


\IEEEtitleabstractindextext{%
\begin{abstract}
For visual object tracking, it is difficult to realize an almighty online tracker due to the huge variations of target appearance depending on an image sequence. This paper proposes an online tracking method that adaptively aggregates arbitrary multiple online trackers. The performance of the proposed method is theoretically guaranteed to be comparable to that of the best tracker for any image sequence, although the best expert is unknown during tracking. The experimental study on the large variations of benchmark datasets and aggregated trackers demonstrates that the proposed method can achieve state-of-the-art performance. The code is available at \url{https://github.com/songheony/AAA-journal}.
\end{abstract}

\begin{IEEEkeywords}
Online visual object tacking, Adaptive expert aggregation, Regret bound
\end{IEEEkeywords}}

\maketitle

\IEEEdisplaynontitleabstractindextext

\IEEEpeerreviewmaketitle

\IEEEraisesectionheading{\section{Introduction}\label{sec:introduction}}
\IEEEPARstart{V}{isual} object tracking (VOT) is a research field of significant interest, and is widely applied in fields such as video surveillance~\cite{Mangawati_2018}, traffic flow monitoring~\cite{Tian_2011} and autonomous driving~\cite{Yurtsever_2020}. Various tracking methods are proposed every year~\cite{Fiaz_2019}, but VOT still involves issues relating to areas such as target appearance change, target motion change, occlusion, camera motion, environment illumination change~\cite{Kristan_2016}. \par
These issues are amplified in {\em online} tracking tasks, where the target location needs to be determined in a frame-by-frame manner. Even when there is no target-like region in the current frame due to heavy appearance changes or occlusion of the target object, it is necessary to determine the target location before consideration of the next frame. Once an erroneous determination is made, it is difficult to recover and track the target object properly again.\par
Fig.~\ref{fig:hard_tracking} shows issues with online tracking. Twelve state-of-the-art trackers are applied to six benchmark datasets, with monitoring based on the ratio of achievement for the best tracker in each dataset. The results indicate the difficulty of realizing an ``almighty'' tracker even for a single dataset. That is, no single tracker based on a specific criterion can handle extended variation of tracking tasks. 
\par
A promising strategy for more robust online tracking is to use multiple trackers~\cite{Avidan, Grabner_2006, Zhang_2014, Wang_2018, Qi_2016, Qi2019HedgingDF}. Aggregation of various trackers with different characteristics can be expected to produce complementary interaction. In particular, the potential to combine and/or select trackers based on their reliability will make the results more robust than with a single one.\par
However, estimating the reliability of each tracker is not straightforward, and reliability should be updated during the image sequence because the target condition (i.e., the appearance of the target and the background) will change frame-by-frame. Fig.~\ref{fig:change_reliable} shows two examples in which the most reliable tracker (i.e., the one that determines the target) for one frame becomes totally unreliable in a later frame.\par

\begin{figure}[!t]
\centering
\includegraphics[width=1\linewidth]{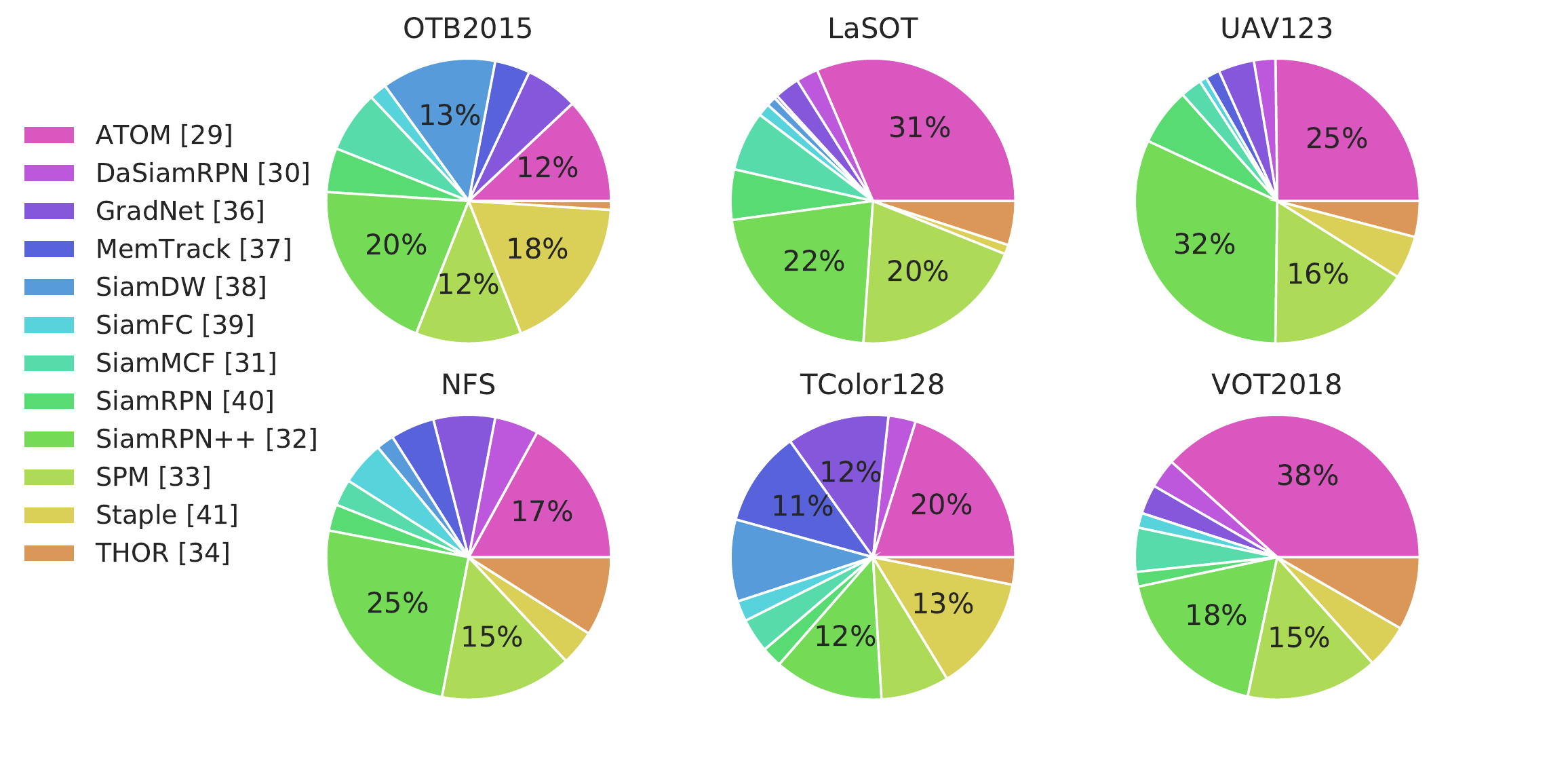}
\caption{{\bf No ``almighty'' tracker.} The percentages represent the ratio of image sequences that each state-of-the-art tracker performs best for the individual tracking benchmarks.}
\label{fig:hard_tracking}
\end{figure}

\begin{figure}[!t]
\centering
\begin{tabular}{c|c}
\includegraphics[width=0.45\linewidth]{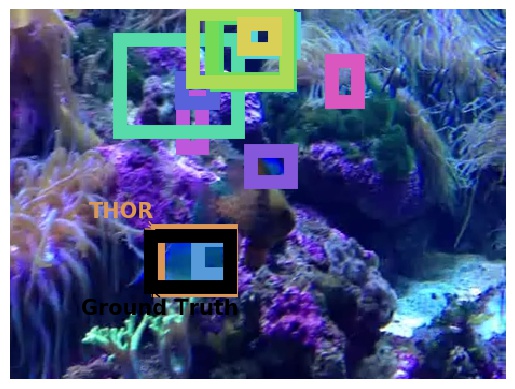} &
\includegraphics[width=0.45\linewidth]{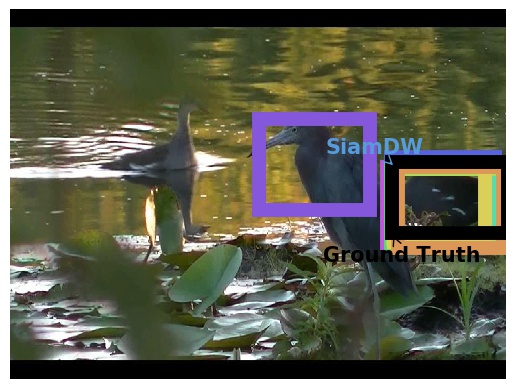} \\
\includegraphics[width=0.45\linewidth]{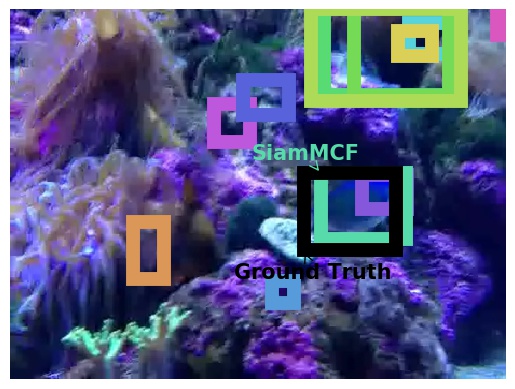} &
\includegraphics[width=0.45\linewidth]{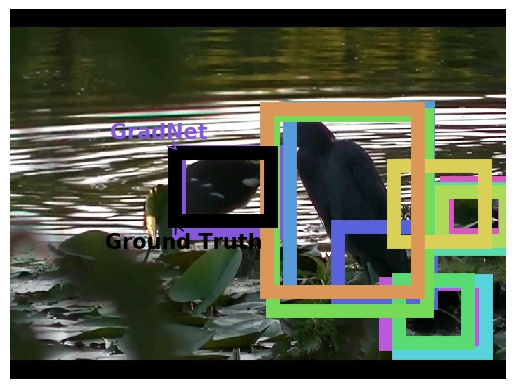}
\end{tabular}
\caption{{\bf The change of the most reliable tracker.} The images of a frame (top) and a few frames after it (bottom) in two image sequences are shown. Each bounding box represents the estimation of experts and the true target location (black). The best tracker changes during the two frames.}
\label{fig:change_reliable}
\end{figure}

In this paper, the authors propose a novel tracking method called Adaptive Aggregation of Arbitrary (AAA) trackers based on {\em Adaptive Expert Aggregation} (AEA). AEA has been studied in the field of theoretical machine learning~\cite{vovk1998game}\footnote{Adaptive expert aggregation is often called \textit{online prediction}~\cite{vovk1998game} or \textit{online learning}~\cite{Shalev_Shwartz_2011} in theoretical machine learning research. In this paper, these terms are avoided in order to avoid confusion with their different meanings in computer vision and pattern recognition research.}, and is a problem involving the aggregation of {\em experts} online.
More specifically, individual {\em experts} give their own solutions to the given task at each time step, and these solutions are then aggregated using a particular algorithm. In AAA, each expert corresponds to an online tracker that estimates the location of the target as its solution, as shown in Fig.~\ref{fig:overview}. $N$ different online trackers will produce $N$ location predictions (i.e., $N$ bounding boxes, often with different sizes) for each frame $t$. These predictions are then aggregated into a single solution (i.e., the target location) for each frame $t$ using a weighted random selection algorithm. 
\par
The strength of AAA is that its performance is theoretically guaranteed in terms of {\em regret} due to the solid theoretical background of AEA. Regret is defined by the difference between the performance of the {\em best expert}\footnote{As noted below, the best expert is unknown during tracking - only at the very end of the image sequence, i.e., at $t=T$, can it be known which $N$ is the best expert for the sequence.} and the performance of the aggregation result.
In VOT with $N$ trackers, the best expert among them gives the best tracking accuracy over an image sequence. If regret is bounded, the accuracy difference between AAA and the best expert can also be bounded. 
It is practically meaningful to have a theoretical guarantee (i.e., a regret bound) because this means that the target location estimated using AAA at frame $t (<T)$ is often not far away from the estimation of the best expert.
\par
This strength of AAA is further emphasized as described here. 
First, this theoretical bound holds with arbitrary experts. Arbitrary trackers (especially state-of-the-art trackers) can therefore be used as experts, whereas the traditional method with multiple trackers often can employ only specific trackers. The bound holds even in an {\em adversarial environment}~\cite{Shalev_Shwartz_2011} in which, for example, 
a tracker that was reliable until $t$ can become totally unreliable at $t+1$. This is not an unrealistic environment, as already observed in Fig.~\ref{fig:change_reliable}. Even for image sequences with such extreme situations, the proposed method is still guaranteed in terms of regret. Even though we can know the best expert for an image sequence may be known only at the last ($T$th) frame, it is still theoretically possible to bound the regret of AAA.\par

\begin{figure}[!t]
\centering
\includegraphics[width=1\linewidth]{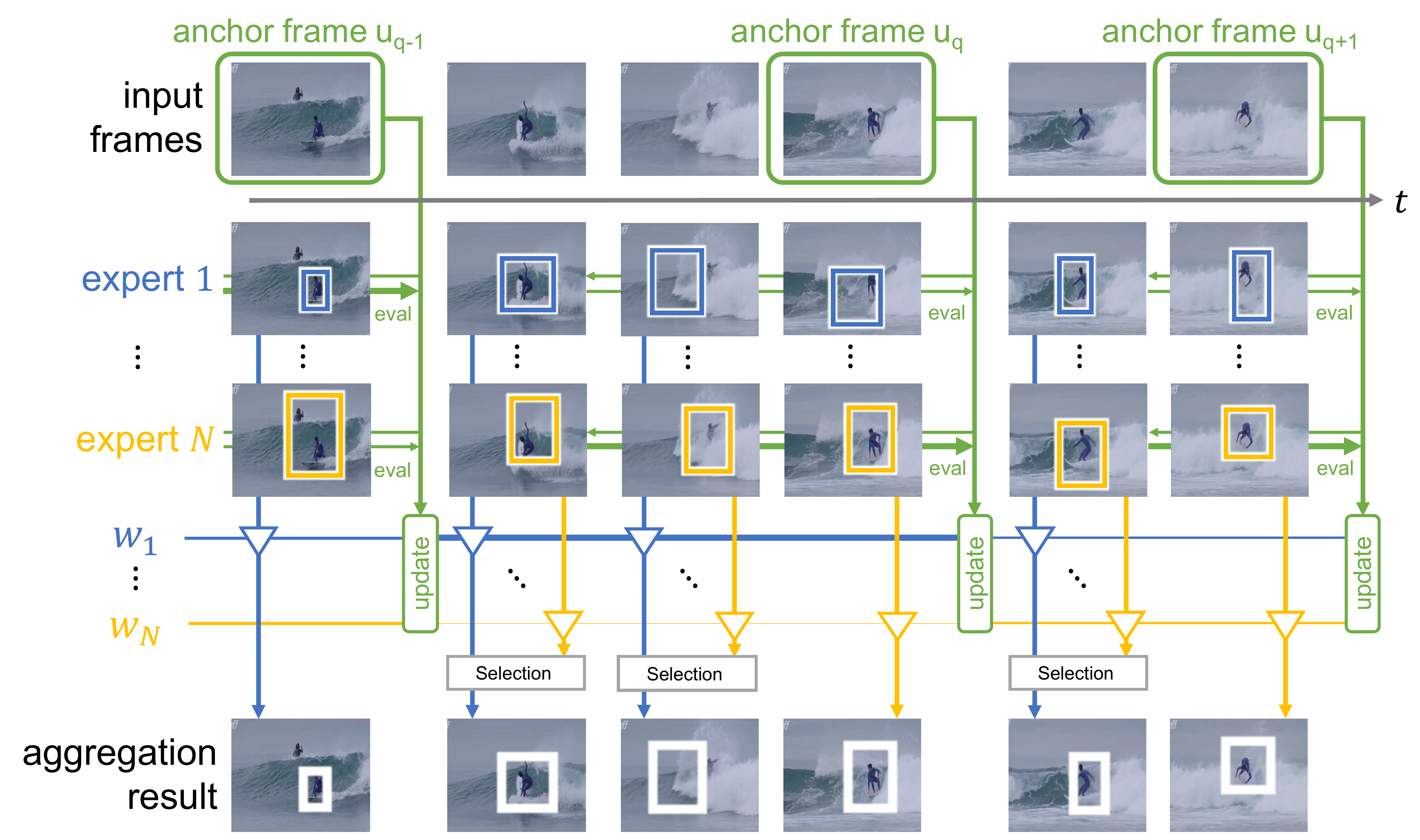}
\caption{The overview of the proposed online tracking method, called Adaptive Aggregation of Arbitrary trackers (AAA). At each frame, the target location is determined as the estimation of an expert which is stochastically selected according to the weight (i.e., reliability) of experts. At the anchor frame, expert weights are updated using delayed feedback.}
\label{fig:overview}
\end{figure}

In AAA, the reliability (i.e., the weight) of each tracker is better evaluated by the proximity of tracker estimation to the true target location (i.e., the ground truth); in theoretical machine learning research, the ground truth for expert evaluation is called {\em feedback}. In VOT tasks, however, it is impossible to obtain exact feedback for each frame because the true target location is not given during online tracking. Accordingly, a practical strategy called  {\em delayed feedback} is adopted in AAA. As shown in Fig.~\ref{fig:overview}, experts receive feedback for {\em anchor frames}, where the target location is determined with high reliability. At the $q$th anchor frame $u_q$, feedback for the $i$th tracker is calculated as the difference from a very reliable {\em offline} tracking result between the previous and current anchor frames $u_{q-1}$ and $u_q$. Since feedback at the frame $\tau\in [u_{q-1}+1,u_q]$ is postponed until $u_q$, this is known as delayed feedback.
\par
It should be noted that the performance of the proposed method is still guaranteed in terms of regret, even with the delayed feedback strategy. Additionally, although offline tracking results are not always exact and thus the delayed feedback is not calculated from the true target location, the theoretical guarantee of the proposed method still holds. These strong theoretical guarantees underpin the very promising performance of the proposed method in various practical situations, as experimentally proved in the later sections.
\par
The main contributions of this paper are as follows:
\begin{itemize}
    \item The authors propose an online tracking algorithm called AAA, by which arbitrary experts (online trackers) are aggregated with promising theoretical performance guarantees.
    \item To the best of the authors' knowledge, this is the first application of an AEA-based algorithm with delayed feedback in a computer vision task.
    \item To demonstrate the experimental performance of the proposed method in an adversarial environment, various experiments were conducted with combinations of arbitrary experts on various datasets.
    \item The experimental results show that the proposed method produced quasi-optimal or optimal performance among state-of-the-art trackers.
\end{itemize}
\par
Numerous extensions are shown in this work from preliminary publication by the authors~\cite{Song2020}. A new important theoretical investigation (Proposition \ref{prop:ours}) is presented to clarify how AAA outperforms other trackers.
The paper outlines more up-to-date experimental validations based on six recent benchmark datasets as well as state-of-the-art online trackers and ensemble tracking methods. The experimental results show that AAA can be used to achieve state-of-the-art performance, and several new experimental setups are added. By way of example, different expert sets were used to allow observation of related effects on performance, with results revealing that AAA is very stable in relation to expert choice.

\section{Related Work\label{sec:related}}
\subsection{Adaptive Expert Aggregation (AEA)}
The goal of AEA is to make predictions with a low regret by aggregating experts. The theories around AEA are discussed in Sec.~\ref{sec:AEA-outline} and~\ref{sec:AEA-regret}, as AEA is not popular in the computer vision field. The Hedge algorithm~\cite{Littlestone_1994, Freund_1997} is a popular algorithm in AEA for prediction of the weighted average of expert solutions. It can be easily applied to expert selection using weights for probability distribution, and achieves good regret bound for the expert selection tasks. \par
One application of the Hedge algorithm involves an adaptive disk spin-down problem. Helmbold et al.~\cite{Helmbold_1996} proposed a method to minimize the energy cost of the disk by aggregating experts estimating the timing of this spin-down. Recently an algorithm called Follow the Regularized Leader (FTRL) has been applied for meta-learning~\cite{finn2017model}. On the other hand, algorithms with delayed feedback have mainly been discussed from a theoretical perspective. Specifically, Quanrud et al.~\cite{Quanrud2015OnlineLW} proposed various algorithms, including the Hedge algorithm with delayed feedback. However, neither practical application nor experimental validation have been completed in ~\cite{Quanrud2015OnlineLW}.

\subsection{Ensemble tracking methods}
Many of the various ensemble tracking methods previously proposed~\cite{Wang2014EnsembleBasedTA, Han2017BranchOutRF, Zhou2014AnEO, Tian2007OnLineES} have leveraged carefully designed synergy among trackers, making it difficult to aggregate arbitrary trackers. By way of example, Avidan et al.~\cite{Avidan} and Grabner et al.~\cite{Grabner_2006} aggregated trackers complementarily trained using AdaBoost~\cite{Freund_1997}. Zhang et al.~\cite{Zhang_2014} proposed a tracking method whose experts are trackers derived from the same (e.g., SVM-based) tracking algorithm. Experts differ in terms of updated frames used to deal with past appearances of the target object. \par
Some ensemble methods can be applied to aggregate arbitrary trackers. Wang et al.~\cite{Wang_2018} proposed a tracking method, called the Multi-Cue Correlation filter based Tracker (MCCT). Similar to the method proposed AAA, any tracker that outputs a bounding box as its prediction can be employed as an expert. However, the practical success of the MCCT is supported by the strong assumption that bounding boxes given by experts are close together. Accordingly, MCCT used specific experts that satisfy assumptions based on inter-expert sharing of ROI.
In aggregation of arbitrary trackers, this assumption may not be met because some trackers may predict a completely different location to others as described in the Appendix~\ref{sec:append_mcct}. The authors' experimental results show that deviation from the assumption degrades MCCT performance.\par
Qi et al.~\cite{Qi2019HedgingDF} proposed an AEA-based tracking method called the Hedged Deep Tracker (HDT*) based on the Hedge algorithm with multiple experts. To the best of the authors' knowledge, this is the only trial to have applied AEA-based aggregation for VOT. As detailed in the Appendix~\ref{sec:append_hdt}, HDT* uses feedback based on expert predictions (rather than other reliable resources such as offline trackers). HDT* also assumes that feedback is given for every frame regardless of reliability. Accordingly, the feedback may be relatively unreliable when a majority of experts do not perform well, and such unreliability degrades overall performance.

\section{Adaptive Aggregation of Arbitrary trackers (AAA)\label{sec:method}}
\subsection{Overview}\label{sec:AAA-overview}
As shown in Fig.~\ref{fig:overview}, the proposed method, AAA, assumes $N$ arbitrary experts (online trackers). At each frame $t$, the method involves stochastic selection of an expert based on weights $w_1^t,\ldots, w_N^t \in \mathbb{R}$ as probabilistic distribution, where $\sum_{i=1}^N w_i^t=1$ and $w_i^t \geq 0$ for any $i$. That is, an expert with a greater weight has a higher chance of being selected. The target location $p^t$ estimated by the selected expert is then used as the output of AAA at $t$. 
Through this simple process, the proposed method enables performance similar to that of the best expert over an image sequence in any situation, even for extreme conditions.
\par
From an algorithmic viewpoint, the main concern is how and when to update the weights $w_1^t,\ldots, w_N^t$. Since $w_i^t$ indicates the reliability of the $i$th tracker, it should be updated using reliable information. Ideally, if the true target location is present at $t$, a greater weight acn be assigned for a tracker that estimates a similar location. However, determination of the true target location is practically impossible. \par
Accordingly, a reliable {\em offline} tracker between two {\em anchor frames} was used to update weights. The frame $t$ is defined as the $q$th anchor frame $u_q$ if the target object is found at a certain location $y^t$ by an object identifier (or tracker) with very high confidence. Connecting two reliable target locations $y^{u_{q-1}}$ and $y^{u_q}$ using an accurate  offline tracker produces the pseudo-ground-truth sequence $y^\tau, \tau \in [u_{q-1}+1, u_q]$. If the target location estimated via the $i$th expert is similar to the pseudo-ground-truth during $[u_{q-1}+1, u_q]$, the weight of the expert will be increased at the anchor frame $u_q$. 
Globally-optimal offline tracking is used based on Dijkstra's algorithm, as detailed in Appendix~\ref{sec:append_offline}.
\par
The pseudo-ground-truth is referred to as feedback based on the terminology of theoretical AEA research because it is used for posterior evaluation in relation to individual experts. Specifically in AAA, it is referred to as delayed feedback because feedback during $\tau\in [u_{q-1}+1, u_q]$ is given in a later frame $u_q$, as shown in Fig.~\ref{fig:overview}. It should be emphasized again that the performance of the proposed method is still guaranteed even with the use of the pseudo-ground-truth as delayed feedback, as detailed in Sec.~\ref{sec:theory}.\par

\begin{algorithm}[t]
\caption{The proposed method: AAA.\label{alg:AAA}}
\begin{algorithmic}
\Inputs{The initial target location $f_0$.\par $N$ arbitrary experts.}
\Outputs{Predicted target location $p^1,\ldots,p^t,\ldots$}
\Initialize{The Weight of the experts $w_i^1 \gets 1/N, \forall i$.\par 
The initial anchor frame $u_1 \gets 1$ and $q \gets 1$.\par
The initial learning rate $\eta \gets \sqrt{\ln N}$.}
\For {$t=2,\dots $}
    \State Get estimation $f^t_1, \ldots, f^t_N$ from the experts.
    \If{$t$ is determined to be an anchor frame}
        \State Increase the number of anchor frame $q \gets q+1$.
        \State Store $t$ as the last anchor frame $u_q \gets t$.
        \State Obtain delayed feedback $y^{u_{q-1}+1}, \ldots, y^{u_q}$.
        \State Calculate each cumulative loss by using (\ref{eq:loss}).
        \State Update $\eta$ using doubling trick.
        \State Update weights by using (\ref{eq:weight}).
        \State Set the target location $p^{t} \gets y^t$.
    \Else
        \State Do not update weights $w_i^{t} \gets w_i^{t-1}, \forall i$
        \State \multilinestate{Select the target location $p^{t}$ from $f^t_1, \ldots, f^t_N$ \newline stochastically using $w_1^t, \ldots, w_N^t$.}
    \EndIf
\EndFor
\end{algorithmic}
\end{algorithm}

\subsection{Updating of expert weights}
Expert weights are updated at each anchor frame using the delayed feedback given at the frame as shown in Fig.~\ref{fig:overview}. 
At the anchor frame $t$, the {\em loss} $L_i$ of the expert $i$ is first calculated using the delayed feedback $y^\tau, \tau\in [u_{q-1}+1, u_q]$ via:
\begin{equation}
\label{eq:loss}
    L_i = \sum_{\tau=u_{q-1}+1}^{u_q} \ell(f_i^\tau, y^\tau), 
\end{equation}
where 
\begin{equation}
\ell(f_i^\tau, y^\tau) = 1 - {\mathcal{P}}(f_i^\tau, y^\tau),
\end{equation}
and $f_i^\tau$ is the $i$th expert's estimation at frame $\tau$. Locations such as $f_i^\tau$ and $y^\tau$ are represented as a bounding box, and the function $\mathcal{P}$ provides evaluation of proximity between two bounding boxes.
Based on this loss, the weight of the expert $i$ is updated at $t=u_q$ via the following equation and used from $t+1$:
\begin{equation}
\label{eq:weight}
    w^{t+1}_i = \frac{w^t_i \exp\left(-\eta L_i\right)}
{\sum_{j=1}^N w_j^t \exp\left(-\eta L_j\right)}.
\end{equation}
In (\ref{eq:weight}), $\eta$ is the learning rate, and its value is carefully and automatically controlled for performance guarantee as detailed in Sec.~\ref{sec:AAA-bound}.
\par
For clarity, AAA is briefly summarized in Algorithm~\ref{alg:AAA}.
The first frame is treated as the first anchor frame $u_1$. The weights are initialized as $w^1_i = 1/N, \forall i$. If the current frame $t$ is not an anchor frame, the weight is not updated, i.e., $w^{t+1}_i=w^t_i$.\par
To evaluate the proximity ${\mathcal{P}}(f_i^\tau,y^\tau)$ of two bounding boxes specified by $f_i^\tau$ and $y^\tau$, IoU~\cite{Wu_2013} is a possible choice. However, if the bounding boxes do not overlap, the IoU score is zero regardless of distance. Accordingly, GIoU~\cite{Rezatofighi_2019} is employed to evaluate both overlap and distance. Any arbitrary function can be used as the loss function $\ell$ to give a theoretical performance guarantee if values are limited in the interval $[0,1]$.

\subsection{Anchor frame determination\label{sec:anchor}}
For accurate delayed feedback close to the exact target location, it is necessary to carefully determine anchor frames because these give the boundary conditions of the offline tracker for such feedback. An anchor frame is determined when the target location is determined with very high confidence. There are several approaches to highly-confident determination. For example, if the target is a specific pedestrian or athlete, the face or jersey number can be used for determination with the help of person re-identification or scene text OCR techniques. \par
For application of AAA to various datasets with various targets, a more general approach can be used for highly-confident target determination with reliance on the target template image. In VOT tasks, the template is generally given as the bounding box at the initial frame. If one or more $N$ experts determine a bounding box whose normalized cosine similarity\footnote{``Normalized'' cosine similarity  $\in [0,1]$ is simply given by $(1 + \mathrm {cosine\ similarity})/2$. } to the template is greater than the threshold $\theta$, the current frame $t$ is determined as an anchor frame. The bounding box of the expert with the maximum similarity is determined as $y^t$. The threshold $\theta$ is the only hyper-parameter of the proposed method. As discussed in Sec.~\ref{sec:threshold}, an appropriate value of $\theta$ is experimentally determined for experts.\par
The cosine similarity between the template and the bounding box determined is evaluated using the feature vectors given by ResNet~\cite{He_2016}.
Specifically, like \cite{Lopes_2017}, the output of the average pooling layer of ResNet is used as a feature vector. Here, ResNet pre-trained with ImageNet~\cite{Russakovsky_2015} is used with no extra training.
Both the template and the bounding box are converted to feature vectors using ResNet, and their normalized cosine similarity is calculated.  

\section{Theoretical guarantee of AAA\label{sec:theory}}
\subsection{General preliminaries of AEA}\label{sec:AEA-outline} 
Before an explanation of the theoretical guarantee of AAA, there is a need for a brief introduction to the general theories surrounding AEA, which is generally considered as a {\em repeated game} between a player and an adversarial environment. At each round $t=1,\ldots,T$, the player receives $N$ advice $f^t_1,\ldots,f^t_N$ from $N$ experts. The player makes a prediction $p^t$ based on this advice, and the environment gives its feedback $y_t$ to $p^t$. The player suffers the loss $\ell\left(p^t, y^t\right)$. \par
Here, AAA is based on AEA and thus has clear correspondence with the above terminologies. Specifically, the round, player and experts correspond to the frame $t$, AAA (i.e., the proposed tracker) and the online trackers to be aggregated, respectively. The prediction $p^t$, advice $f_i^t$, and feedback $y^t$ correspond to the target location determined from AAA, the target locations estimated from the $N$ online trackers, and the offline tracking result, respectively.\par
The goal of AEA is to minimize the regret $R_T$:
\begin{equation}
R_T = \mathbb E\left[\sum_{t=1}^T \ell(p^t, y^t)\right] - \min_{i= 1,\ldots, N} \sum_{t=1}^T  \ell(f_i^t, y^t),
\label{eq:regret}
\end{equation}
where the first and second terms represent the cumulative loss of the player and the {\em best expert}, respectively. The best expert is the one with the minimum cumulative loss among $N$ experts. Intuitively, lower regret means the performance of the player is close to that of the best expert. \par
In AEA with delayed feedback, the player suffers a loss only at $Q (\leq T)$ rounds $u_1, \ldots, u_q, \ldots, u_Q$ rather than at each $t$. At the round $u_q$, the environment gives feedback $y^\tau, \tau\in[u_{q-1}+1, u_q]$ for the predictions of the player between $u_{q-1}$ and $u_q$. In Sec.~\ref{sec:AAA-overview}, it can be seen that $u_q$ corresponds to the $q$th anchor frame in AAA.

\subsection{Regret bound of AEA with delayed feedback}\label{sec:AEA-regret}
The regret bound of AEA with delayed feedback is given as follows:
\begin{theorem}[From Theorem A.5 of \cite{Quanrud2015OnlineLW}]
\label{theo:regret_delay}
Assume an AEA algorithm with the weight-updating strategy of (\ref{eq:weight}) and delayed feedback. Also assume the loss function $\ell\in [0, 1]$ and the learning rate $\eta \propto \sqrt{\ln N/(T+D)}$. The regret of the AEA algorithm after $T$ frames is then bounded as follows:
\begin{equation}
R_T = O \left(\sqrt{\left(T+D\right)\ln N} \right), \label{eq:bound1}
\end{equation}
where $D$ is the total delay.
\end{theorem}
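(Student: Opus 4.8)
The plan is to follow the classical potential-function analysis of the exponential-weights (Hedge) update and then add a correction term that pays for the staleness caused by delayed feedback. I would work with \emph{unnormalized} weights $\tilde w_i^t$ obeying $\tilde w_i^{t+1}=\tilde w_i^t\,e^{-\eta \ell_i^t}$, so that the normalized weights of (\ref{eq:weight}) are $w_i^t=\tilde w_i^t/\sum_j \tilde w_j^t$, and I would track the log-potential
\begin{equation}
\Phi_t=\ln\sum_{i=1}^N \tilde w_i^t .
\end{equation}
A useful preliminary observation is that $e^{-\eta L_i}=\prod_{\tau}e^{-\eta \ell_i^\tau}$, so the single block update performed at an anchor frame using the cumulative loss (\ref{eq:loss}) is algebraically identical to applying the per-frame updates $e^{-\eta\ell_i^\tau}$ one at a time. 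This lets the analysis treat the $T$ individual losses $\ell_i^\tau\in[0,1]$ separately, rather than the large block losses $L_i$.

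First I would establish the non-delayed skeleton. Telescoping $\Phi_{T+1}-\Phi_1$ and lower-bounding $\Phi_{T+1}\ge \ln\tilde w_{i^\star}^{T+1}=-\eta\sum_\tau \ell_{i^\star}^\tau$ for the best expert $i^\star$ isolates the best expert's cumulative loss. For the matching upper bound I would control each per-frame increment $\ln \mathbb E_{i\sim w^\tau}[e^{-\eta \ell_i^\tau}]$ by Hoeffding's lemma, using $\ell\in[0,1]$, to obtain $\ln \mathbb E_{i\sim w^\tau}[e^{-\eta \ell_i^\tau}]\le -\eta\sum_i w_i^\tau\ell_i^\tau+\eta^2/8$. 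Since $\sum_i w_i^\tau\ell_i^\tau$ is exactly the expected loss of the weighted-random selection at frame $\tau$, combining the two bounds with $\Phi_1=\ln N$ gives the standard estimate $R_T\le \frac{\ln N}{\eta}+\frac{\eta}{8}T$ in the idealized case where each update is applied immediately after its frame.

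The main obstacle is the delay correction, since the weights actually used by the player are stale. When a prediction is drawn at frame $t$, the vector $w^t$ has not yet absorbed the losses of the frames in the current block whose feedback is still pending, so the player's true expected loss is governed by these stale weights rather than by the up-to-date ones assumed above. I would bound this gap by comparing the stale weight vector with the hypothetical vector that already incorporates all losses through $t$: because each missing loss contributes a multiplicative factor $e^{-\eta\ell}$ with $\ell\in[0,1]$, one frame of staleness costs $O(\eta)$ in the relevant divergence, and summing this cost over every (frame, pending-feedback) pair yields a total penalty of order $\eta D$, where $D=\sum_q d_q$ is the accumulated delay. Consequently the full bound takes the form $R_T\le \frac{\ln N}{\eta}+O\big(\eta(T+D)\big)$, so the delay behaves as if it enlarged the effective horizon from $T$ to $T+D$.

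Finally I would tune the learning rate by balancing $\frac{\ln N}{\eta}$ against $\eta(T+D)$, which gives the stated choice $\eta\propto\sqrt{\ln N/(T+D)}$ and the bound $R_T=O(\sqrt{(T+D)\ln N})$. Since $T$ and $D$ are unknown in advance, I would recover the same rate up to constants through the doubling trick already invoked in Algorithm~\ref{alg:AAA}, re-tuning $\eta$ whenever a running estimate of $T+D$ doubles. The delicate point throughout is the delay term: making the $O(\eta D)$ estimate rigorous requires carefully tracking which losses are missing from $w^t$ and bounding the resulting weight drift, which is exactly where the argument departs from the textbook Hedge proof.
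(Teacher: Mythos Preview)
Your sketch is essentially correct and would, with the drift step made precise, yield the stated bound. It does not, however, match the paper's own derivation. The paper does not reprove the Hedge-with-delays bound from scratch; instead it invokes Theorem~A.5 of Quanrud and Khashabi as a black box for Online Mirror Descent with delayed feedback, which gives
\[
R_T = O\!\left(\frac{\psi}{\eta} + \eta\,\frac{G^2(T+D)}{\sigma}\right),
\]
and then observes that the multiplicative update~(\ref{eq:weight}) is exactly OMD with the entropic regularizer, for which $\sigma=1$ and $\psi=O(\ln N)$, while $\ell\in[0,1]$ gives $G=1$. Plugging in $\eta\propto\sqrt{\ln N/(T+D)}$ immediately yields~(\ref{eq:bound1}). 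So the paper's argument is a two-line instantiation of a known general result, whereas you are in effect redoing the relevant part of that result by hand for this specific algorithm. Your route is more self-contained and makes explicit why the delay simply inflates the horizon from $T$ to $T+D$; the paper's route is shorter and highlights that nothing here is algorithmically new beyond the OMD framework.

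One small caution on your write-up: you denote the total delay as $D=\sum_q d_q$, indexed by block, but the quantity you actually need (and correctly describe in words as ``summing over every (frame, pending-feedback) pair'') is the sum over frames of the number of still-outstanding feedbacks at that frame, which matches the paper's $D=\sum_{q}\sum_{\tau=1}^{u_q-u_{q-1}}\tau$. Make sure your $d_q$ is defined as the within-block cumulative delay rather than the block length, or the bound degenerates. The acknowledged ``delicate point'' about rigorously controlling the stale-versus-fresh weight drift is indeed where all the work lies; the standard way to close it is to bound $|\langle w^{\text{stale}}-w^{\text{fresh}},\ell\rangle|$ by $O(\eta)$ per missing update using $e^{-\eta\ell}\in[e^{-\eta},1]$, exactly as you outline.
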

\noindent The {\em total delay} $D$ is defined as $D=\sum_{q=2}^Q\sum_{\tau=1}^{u_q - u_{q-1}}\tau=\sum_{q=2}^{Q}\left(u_q - u_{q-1}\right)\left(u_q - u_{q-1}+1\right)/ 2$.
Thus, $D$ takes its minimum value $T$ when feedback is given at every frame and its maximum value $(T^2+T)/2$ when no feedback is given until $t=T$ after $t=1$ (i.e., $u_1=1, u_2=T$, and $Q=2$). Appendix~\ref{sec:append_regret} details the derivation of Theorem \ref{theo:regret_delay} from Theorem A.5 of \cite{Quanrud2015OnlineLW}.
\par
Theorem~\ref{theo:regret_delay} states that regret $R_T$ increases according to $D$. In the worst case, when $D$ takes its maximum value $(T^2+T)/2$, the regret bound is linearly proportional to $T$. This means that the performance difference between AEA and the best expert will increase drastically with $T$. 

\begin{figure*}[!t]
\centering
\includegraphics[width=1\linewidth]{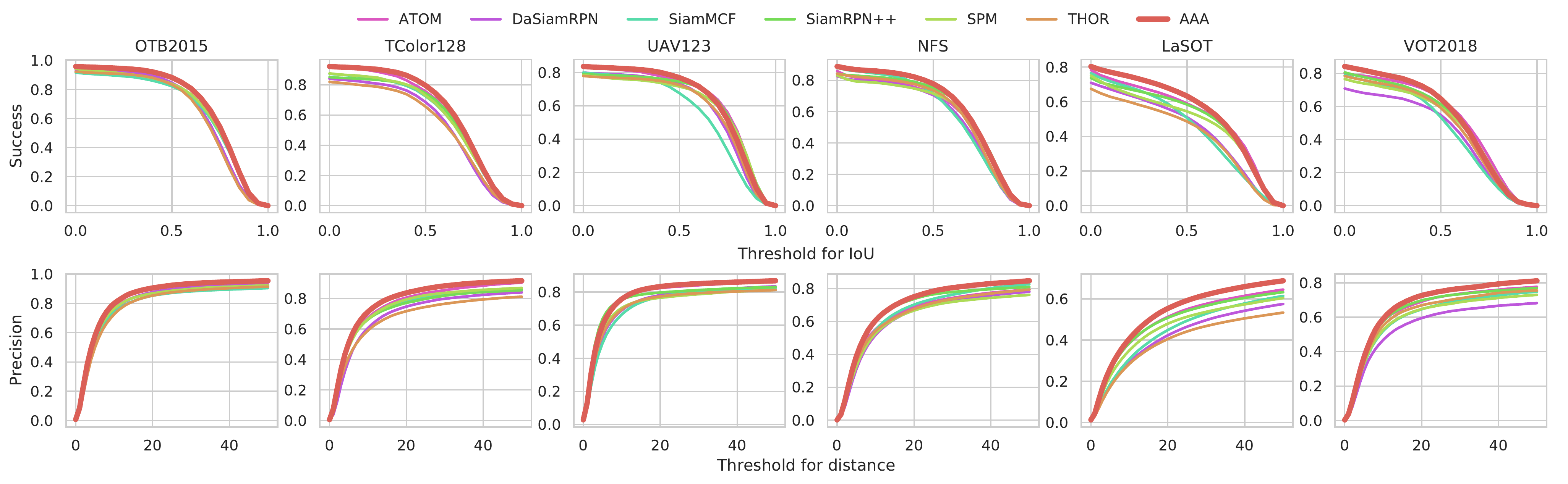}
\caption{Success and precision plots in \textit{High} group.}
\label{fig:high_curve}
\end{figure*}

\begin{table*}[!t]
    \centering
    \begin{threeparttable}
    \caption{Tracking Accuracy by \textit{High} Group\vspace{-2mm}}
    \label{table:high_score}
    \begin{tabular}{c|c|c|c|c|c|c|c|c|c|c|c}
    \hline
    \multirow{2}{*}{Tracker} & \multicolumn{2}{c|}{OTB2015} & \multicolumn{2}{c|}{TColor128} & \multicolumn{2}{c|}{UAV123} & \multicolumn{2}{c|}{NFS} & \multicolumn{2}{c|}{LaSOT} & VOT2018 \\
      & AUC & DP & AUC & DP & AUC & DP & AUC & DP & AUC & DP & AUC \\
    \hline\hline
    ATOM~\cite{Danelljan_2019} & 0.67 & 0.87 & {\color{blue} \textit{0.60}} & {\color{blue} \textit{0.81}} & {\color{red} \textbf{0.62}} & {\color{blue} \textit{0.82}} & 0.58 & 0.69 & {\color{blue} \textit{0.51}} & {\color{blue} \textit{0.51}} & {\color{blue} \textit{0.52}} \\
    DaSiamRPN~\cite{Zhu_2018} & 0.65 & 0.88 & 0.53 & 0.75 & 0.57 & 0.78 & 0.55 & 0.67 & 0.43 & 0.42 & 0.43 \\
    SiamMCF~\cite{Morimitsu2018MultipleCF} & 0.65 & 0.85 & 0.57 & 0.78 & 0.54 & 0.77 & 0.57 & 0.70 & 0.44 & 0.45 & 0.45 \\
    SiamRPN++~\cite{Li_2019} & {\color{blue} \textit{0.69}} & {\color{blue} \textit{0.90}} & 0.58 & 0.77 & 0.60 & 0.80 & 0.60 & {\color{blue} \textit{0.74}} & 0.49 & 0.51 & 0.50 \\
    SPM~\cite{Wang_2019} & 0.67 & 0.87 & 0.58 & 0.79 & 0.59 & 0.77 & 0.57 & 0.67 & 0.47 & 0.48 & 0.48 \\
    THOR~\cite{Sauer2019BMVC} & 0.64 & 0.85 & 0.52 & 0.72 & 0.57 & 0.77 & 0.57 & 0.68 & 0.40 & 0.41 & 0.47 \\
    \hdashline
    HDT*~\cite{Qi2019HedgingDF}& - & 0.83 & - & 0.71 & - & 0.73 & - & 0.58 & - & 0.37 & - \\
    MCCT~\cite{Wang_2018} & 0.64 & 0.83 & 0.53 & 0.72 & 0.58 & 0.76 & 0.57 & 0.69 & 0.42 & 0.44 & 0.40 \\
    Random & 0.66 & 0.87 & 0.56 & 0.77 & 0.58 & 0.79 & 0.57 & 0.69 & 0.46 & 0.46 & 0.48 \\
    Max & 0.68 & 0.89 & 0.56 & 0.77 & 0.58 & 0.78 & {\color{blue} \textit{0.61}} & 0.74 & 0.46 & 0.46 & 0.46 \\
    AAA(Proposed) & {\color{red} \textbf{0.70}} & {\color{red} \textbf{0.91}} & {\color{red} \textbf{0.62}} & {\color{red} \textbf{0.84}} & {\color{blue} \textit{0.62}} & {\color{red} \textbf{0.83}} & {\color{red} \textbf{0.61}} & {\color{red} \textbf{0.75}} & {\color{red} \textbf{0.53}} & {\color{red} \textbf{0.55}} & {\color{red} \textbf{0.52}} \\
    \hline
    \end{tabular}
    \begin{tablenotes}
      \item{$\cdot$} The best tracker is indicated with {\color{red} \textbf{red bold}} and the second is indicated with {\color{blue} \textit{blue italic}}. 
      \item{$\cdot$} AUC: average area-under-curve score, DP: average distance precision. Bigger AUC and DP mean better performance.
      \item{$\cdot$} HDT is only evaluated by DP for a fair comparison. See Appendix~\ref{sec:append_hdt} for the detailed reason.
    \end{tablenotes}
    \end{threeparttable}
\end{table*}

\subsection{Regret bound of AAA\label{sec:AAA-bound}}
Based on Theorem~\ref{theo:regret_delay}, the regret bound of AAA can be derived. This can be lower than (\ref{eq:bound1}) in general because delayed feedback will be far more frequent than the worst case in real-world tracking. Denoting $r$ as the {\em anchor frame ratio}, which is the probability that a frame is determined as an anchor frame, the regret bound outlined below is derived.
\begin{theorem}
\label{theo:regret_cdelay}
Assume the AEA algorithm of Theorem \ref{theo:regret_delay} can have delayed feedback with the anchor ratio $r \in (0,1]$ at each frame.
The expectation of the regret is then upper-bounded as follows:
\begin{equation}
\mathbb E_{r}\left[R_{T}\right] = O \left(\sqrt{{1 \over r}T \ln N} \right).
\label{eq:bound2}
\end{equation}
\end{theorem}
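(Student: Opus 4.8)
The plan is to reduce Theorem~\ref{theo:regret_cdelay} to Theorem~\ref{theo:regret_delay} by controlling the expected total delay $\mathbb E_r[D]$ when anchor frames are produced by independent Bernoulli$(r)$ trials. Since (\ref{eq:bound1}) already gives, for every realization of the anchor frames, a constant $C$ with $R_T \le C\sqrt{(T+D)\ln N}$, it suffices to prove $\mathbb E_r[D] = O(T/r)$ and then move the expectation inside the square root.

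First I would rewrite the total delay as a sum of per-frame waiting times, which turns the analysis into a routine application of linearity of expectation and avoids reasoning about the random, mutually dependent number of anchors $Q$ and the gap lengths $u_q-u_{q-1}$. Concretely, for a block of frames $u_{q-1}+1,\dots,u_q$ the inner sum in the definition of $D$ can be reindexed as $\sum_{\tau=1}^{u_q-u_{q-1}}\tau=\sum_{t=u_{q-1}+1}^{u_q}(u_q-t+1)$, so if $\delta_t$ denotes the number of frames from $t$ up to and including the next anchor, then $D=\sum_{t}\delta_t$ exactly. Under Bernoulli$(r)$ anchoring each $\delta_t$ is (dominated by) a geometric random variable with parameter $r$, hence $\mathbb E_r[\delta_t]\le 1/r$. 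By linearity, $\mathbb E_r[D]=\sum_t \mathbb E_r[\delta_t]\le T/r$, with the boundary effect of a possibly censored final block being lower order (or removed by inserting a virtual anchor at $t=T$).

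It then remains to combine this estimate with the realized-delay bound. Using $r\le 1$ we have $T+\mathbb E_r[D]\le T+T/r\le 2T/r$, and since $x\mapsto\sqrt{x}$ is concave, Jensen's inequality gives
\[
\mathbb E_r[R_T]\le C\sqrt{\ln N}\,\mathbb E_r\left[\sqrt{T+D}\,\right]\le C\sqrt{\ln N}\,\sqrt{T+\mathbb E_r[D]}=O\!\left(\sqrt{\frac{1}{r}T\ln N}\right),
\]
which is exactly (\ref{eq:bound2}).

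I expect the main obstacle to be the bookkeeping in the second step rather than any deep estimate: a direct computation of $\mathbb E_r[D]$ through $\mathbb E_r[Q]\cdot\mathbb E_r[(u_q-u_{q-1})^2]$ would require Wald's identity together with care about the dependence between $Q$ and the gaps and about the truncated last gap, whereas the per-frame regrouping replaces all of this with linearity of expectation. Two further points deserve a remark: the learning rate is chosen by the doubling trick rather than from prior knowledge of $T+D$, but this only affects the constant hidden in Theorem~\ref{theo:regret_delay}; and the passage from a per-realization bound on $R_T$ to a bound on $\mathbb E_r[R_T]$ must go through concavity, since the inequality $\mathbb E_r[\sqrt{T+D}]\le\sqrt{\mathbb E_r[T+D]}$ is the one pointing in the direction we need.
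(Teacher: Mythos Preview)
Your argument is correct and in fact more careful than the paper's own proof. The paper computes $\mathbb E_r[D]$ by substituting $\mathbb E_r[u_q-u_{q-1}]=1/r$ and $\mathbb E_r[Q]=rT$ directly into $D=\sum_{q=2}^{Q}(u_q-u_{q-1})(u_q-u_{q-1}+1)/2$ to obtain $rT\cdot(1/r)(1/r+1)/2=O(T/r)$, and then simply asserts that taking the expectation in (\ref{eq:bound1}) yields (\ref{eq:bound2}). This is heuristic on two counts you already anticipated: it plugs $\mathbb E[X]$ into the nonlinear map $X\mapsto X(X+1)/2$ (for a geometric $X$ the true value $\mathbb E[X(X+1)/2]=1/r^2$ differs from $(1/r)(1/r+1)/2$, though both are $\Theta(1/r^2)$), and it silently relies on a Wald-type identity for the random-length sum over $q$. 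Your per-frame regrouping $D=\sum_t\delta_t$ sidesteps both issues with plain linearity of expectation, and your explicit appeal to Jensen to pass $\mathbb E_r$ through the square root fills a step the paper leaves implicit. Both routes yield $\mathbb E_r[D]=O(T/r)$ and hence the same bound, but yours is the more rigorous derivation.
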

\begin{proof}
The expected delay length $\mathbb E_r[u_q - u_{q-1}]$ is $1/r$
and the expected number of anchor frames is $rT$. Thus, the expectation of the total delay is $\mathbb E_r[D]= \sum_{q=2}^{Q}\left(1/r\right)\left(1/r+1\right)/2 =rT  \left(1/r\right)\left(1/r+1\right)/2  = O(T/r)$. Finally, consideration for the expectation of the regret of $R_T$ with $r$ produces the above bound.
\end{proof}
In contrast to Theorem \ref{theo:regret_delay}, when the regret increases linearly with $T$ in the worst case, Theorem \ref{theo:regret_cdelay} guarantees that the regret of AAA increases in $O(\sqrt{T})$. 
This indicates that the regret of AAA is bounded more tightly, and stable performance of AAA can therefore be expected even for a longer sequence.

\subsubsection{Impact of the regret bound for VOT}
Theorem \ref{theo:regret_cdelay} guarantees that the performance difference between the proposed AAA and the best expert is upper-bounded by (\ref{eq:bound2}). Intuitively speaking, this bound is significant in a number of ways. First, it means that the performance of AAA is not far from that of the best expert. 
Second, this guarantee holds for arbitrary experts, arbitrarily delayed feedback (i.e., arbitrary offline trackers and the determination rule for anchor frames) and arbitrary image sequences.
Third, and most interestingly, AAA may demonstrate performance similar to that of the best expert {\em even though the best expert and its performance is unknown until $T$, i.e., the end of the image sequence}. Expert selection is made at each frame $t$ in a strictly online condition, and it remains unknown which expert will be the best; nevertheless, these theorems guarantee that the performance of AAA will not be far from that of the best expert.

\subsubsection{Effect of the anchor frame ratio \texorpdfstring{$r$}{}}
One might expect that it is better to set $r=1$ (to make all frames the anchor frames), since the regret bound based on (\ref{eq:bound2}) is minimum when $r=1$. However, it must be remembered that the anchor frame should be set to give reliable delayed feedback with a reliable offline tracking result. All the above theories rely on the loss function $\ell$, which treats $y^t$ given by the offline tracking result as a pseudo-ground-truth.
Accordingly, using unreliable feedback eventually produces a choice far from the true ground-truth as the best expert. Finally, AAA tries to follow this false best expert to keep the regret bound. Consequently, for better AAA performance, anchor frames must be carefully determined with higher settings of $\theta$, even though this makes $r$ smaller. This is experimentally demonstrated in Appendix~\ref{sec:append_threshold}.

\subsubsection{Effect of the number of experts  \texorpdfstring{$N$}{}\label{sec:effect_of_N}}
The theorems remove the need for concern over the choice of experts, because the regret bound is simply a logarithmic representation of the number of experts $N$. In other words, even if many experts are employed, the regret bound will increase only slightly. This increase will not be problematic to practical tracking performance. As regret represents the difference from the best expert, using more experts with different characteristics will increases the chances to have the best expert with better performance. Since the difference is bounded by (\ref{eq:bound2}), the presence of a better best expert will enhance AAA performance.

\subsubsection{Learning rate \texorpdfstring{$\eta$}{}}
\label{subsubsec:doubling_trick}
Theorem \ref{theo:regret_cdelay}, as well as Theorem \ref{theo:regret_delay}, assumes that the learning rate $\eta$ should be proportional to $\sqrt{\ln N/(T+D)}$. However, $T$ and $D$ are usually unknown until the end of the sequence in online tracking tasks. Fortunately, the {\em doubling trick}~\cite{Quanrud2015OnlineLW} allows adaptive control of $\eta$ with the regret guarantee in the same order in Theorem~\ref{theo:regret_cdelay}.
Roughly speaking, this trick uses a tentative value $Z$ instead of $T+D$.
Thus, the parameter $\eta$ is initially set as $\eta = \sqrt{\ln N/Z}$.
Then, if the actual value of $T+D$ reaches $Z$ at the current frame $t$,\footnote{Specifically, this condition means that $t+D_t$ is the same as $Z$, where $D_t$ is the total delay until $t$ defined as 
$D_t=\sum_{\tau=1}^{t - u_{\bar{q}}}\tau + \sum_{q=2}^{\bar{q}}\sum_{\tau=1}^{u_q - u_{q-1}}\tau$. $\bar{q}$ is the latest anchor frame by $t$.} the value of $Z$ is doubled and $\eta$ is updated using the new value of $Z$. The proof that the doubling trick still guarantees the regret bound of Theorem \ref{theo:regret_delay} is detailed in ~\cite{mohri2018foundations} and~\cite{Quanrud2015OnlineLW} and the proof for Theorem \ref{theo:regret_cdelay} is trivial from this.

\section{Experiments\label{sec:experiments}}

\begin{figure*}[!t]
\centering
\includegraphics[width=1\linewidth]{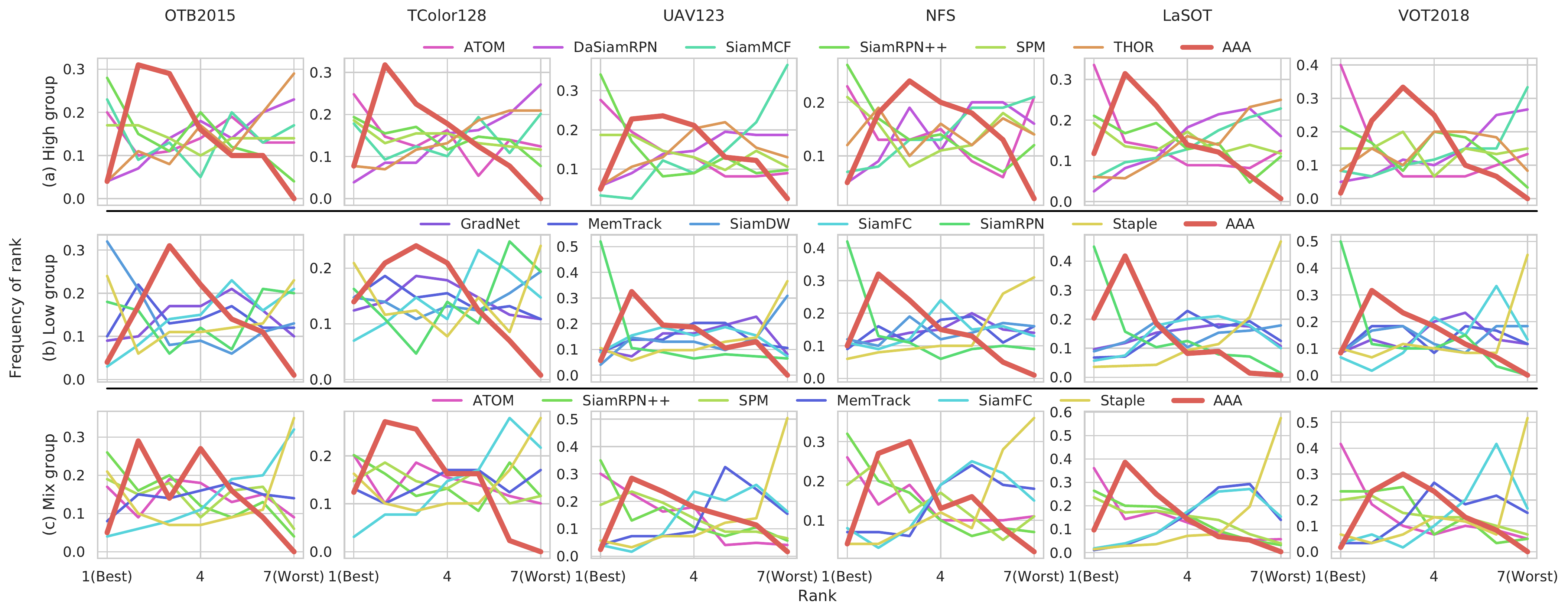}\vspace{-3mm}
\caption{Rank histograms of (a)~\textit{High}, (b)~\textit{Low}, and (c)~\textit{Mix} groups.}
\label{fig:rank}
\end{figure*}

\subsection{Experimental setup}
\subsubsection{Experts and comparative methods}
In relation to experts and comparative methods,
twelve state-of-the-art online trackers were employed (ATOM~\cite{Danelljan_2019}, DaSiamRPN~\cite{Zhu_2018}, GradNet~\cite{Li_2019G}, MemTrack~\cite{Yang_2018}, SiamDW~\cite{Zhang_2019}, SiamFC~\cite{bertinetto2016fully}, SiamMCF~\cite{Morimitsu2018MultipleCF}, SiamRPN~\cite{Li_2018}, SiamRPN++~\cite{Li_2019}, SPM~\cite{Wang_2019}, Staple~\cite{Bertinetto_2016}, and THOR~\cite{Sauer2019BMVC}).
For fair comparison and better performance, parameters optimized by the authors of the individual experts were applied.\par
In addition to these online trackers, AAA was also compared with the MCCT~\cite{Wang_2018} and HDT*~\cite{Qi2019HedgingDF}, aggregation-based tracking methods as detailed in Appendix~\ref{sec:append_mcct}
and Appendix~\ref{sec:append_hdt}, respectively.
Other naive aggregation-based methods referred to as ``Random'' and ``Max'' were also examined. ``Random'' randomly selects an expert estimation for each frame, while ``Max'' selects the estimation most similar to the template image for each frame. ``Max'' is the same as AAA when each frame is an anchor frame and thus feedback is given for each frame.

\subsubsection{Benchmark datasets}
The proposed AAA and the comparative methods were evaluated with OTB2015~\cite{Wu2015ObjectTB}, TColor128~\cite{Liang_2015}, UAV123~\cite{Mueller_2016}, NFS~\cite{Galoogahi_2017}, and LaSOT~\cite{Fan_2019}. OTB2015 is a popular benchmark dataset for evaluating online trackers, consisting of 100 image sequences including gray-scale image sequences. TColor128 contains 128 color image sequences, and is specifically designed for evaluation of color-enhanced trackers. VOT2018 is a dataset produced for competition, and consists of 60 image sequences. 
UAV123 consists of 123 image sequences taken from unmanned aerial vehicles. NFS consists of 100 image sequences captured with a higher frame rate of 240fps. LaSOT is the largest benchmark dataset among the above, and is divied into ``training'' and ``testing'' subsets. Here, image sequences from the testing subset containing 280 image sequences were used.

\subsubsection{Performance monitoring}
For performance monitoring, the area-under-the-curve (AUC) score and average distance precision (DP) were referenced as standard metrics~\cite{Wu_2013}. AUC (referred to as AO in VOT2018) is derived using ``success plot'' for the performance curve. This plot is based on evaluation of the ratio of frames where the IoU with the ground truth is larger than the threshold value ($\in [0,1]$). Fig.~\ref{fig:high_curve} shows examples of the success plot.\par
DP is derived from ``precision plot'' based on evaluation of the ratio of frames where the geometric distance between the location determined and the ground-truth location is less than the threshold value ($\in [0,50]$ pixels). Fig.~\ref{fig:high_curve} also shows examples of the precision plot. DP is eventually determined as the value of the precision plot at the threshold 20 based on \cite{Wu_2013}.\par
In addition to AUC and DP, the performance rank of individual trackers for each image sequence was used. Since there are $N$ experts and AAA, the rank varies from 1 (the best) to $N+1$ (the worst). If AAA successfully follows the best expert for arbitrary image sequences, it will be frequently ranked second. \par
Performance was evaluated under the ``strictly-online'' conditions described here. First, no tracker has any prior information on the total frame length $T$. Second, the no-reset evaluation protocol was used; in some experimental evaluations (e.g., VOT2018) a reset-based evaluation protocol is employed, where failed trackers (with zero IoU with ground-truth) can restart from the correct location a fixed number of frames later. In no-reset evaluation, however, the failed tracker continues tracking rather than being reset.

\begin{table*}[!t]
\centering
\begin{threeparttable}
\caption{Tracking Accuracy by \textit{Low} Group\vspace{-2mm}}
\label{table:low_score}
\begin{tabular}{c|c|c|c|c|c|c|c|c|c|c|c}
\hline
\multirow{2}{*}{Tracker} & \multicolumn{2}{c|}{OTB2015} & \multicolumn{2}{c|}{TColor128} & \multicolumn{2}{c|}{UAV123} & \multicolumn{2}{c|}{NFS} & \multicolumn{2}{c|}{LaSOT} & VOT2018 \\
  & AUC & DP & AUC & DP & AUC & DP & AUC & DP & AUC & DP & AUC \\
\hline\hline
GradNet~\cite{Li_2019G} & 0.63 & 0.85 & {\color{blue} \textit{0.56}} & {\color{blue} \textit{0.76}} & 0.51 & 0.74 & 0.51 & 0.64 & 0.36 & 0.38 & 0.40 \\
MemTrack~\cite{Yang_2018} & 0.63 & 0.82 & 0.54 & 0.74 & 0.49 & 0.70 & 0.50 & 0.61 & 0.34 & 0.35 & 0.39 \\
SiamDW~\cite{Zhang_2019} & {\color{red} \textbf{0.67}} & {\color{red} \textbf{0.91}} & 0.53 & 0.75 & 0.46 & 0.69 & 0.50 & 0.63 & 0.35 & 0.34 & 0.37 \\
SiamFC~\cite{bertinetto2016fully} & 0.59 & 0.78 & 0.52 & 0.70 & 0.51 & 0.74 & 0.51 & 0.60 & 0.35 & 0.36 & 0.33 \\
SiamRPN~\cite{Li_2018} & 0.63 & 0.83 & 0.52 & 0.71 & {\color{red} \textbf{0.58}} & {\color{blue} \textit{0.77}} & {\color{blue} \textit{0.56}} & {\color{blue} \textit{0.66}} & {\color{red} \textbf{0.45}} & {\color{blue} \textit{0.45}} & {\color{red} \textbf{0.48}} \\
Staple~\cite{Bertinetto_2016} & 0.60 & 0.79 & 0.51 & 0.68 & 0.45 & 0.64 & 0.41 & 0.48 & 0.24 & 0.23 & 0.30 \\
\hdashline
HDT*~\cite{Qi2019HedgingDF} & - & 0.75 & - & 0.63 & - & 0.64 & - & 0.48 & - & 0.26 & - \\
MCCT~\cite{Wang_2018} & 0.59 & 0.79 & 0.49 & 0.66 & 0.50 & 0.70 & 0.51 & 0.63 & 0.32 & 0.34 & 0.32 \\
Random & 0.62 & 0.83 & 0.53 & 0.72 & 0.50 & 0.71 & 0.50 & 0.60 & 0.35 & 0.35 & 0.38 \\
Max & 0.63 & 0.83 & 0.52 & 0.71 & 0.51 & 0.73 & 0.53 & 0.64 & 0.35 & 0.35 & 0.38 \\
AAA(Proposed) & {\color{blue} \textit{0.66}} & {\color{blue} \textit{0.87}} & {\color{red} \textbf{0.59}} & {\color{red} \textbf{0.82}} & {\color{blue} \textit{0.56}} & {\color{red} \textbf{0.78}} & {\color{red} \textbf{0.58}} & {\color{red} \textbf{0.69}} & {\color{blue} \textit{0.45}} & {\color{red} \textbf{0.46}} & {\color{blue} \textit{0.45}} \\
\hline
\end{tabular}
    \begin{tablenotes}
      \item{$\cdot$}  See the notes below Table~\ref{table:high_score} for the details. 
    \end{tablenotes}
\end{threeparttable}
\end{table*}

\begin{table*}[!t]
\centering
\begin{threeparttable}
\caption{Tracking Accuracy by \textit{Mix} Group\vspace{-2mm}}
\label{table:mix_score}
\begin{tabular}{c|c|c|c|c|c|c|c|c|c|c|c}
\hline
\multirow{2}{*}{Tracker} & \multicolumn{2}{c|}{OTB2015} & \multicolumn{2}{c|}{TColor128} & \multicolumn{2}{c|}{UAV123} & \multicolumn{2}{c|}{NFS} & \multicolumn{2}{c|}{LaSOT} & VOT2018 \\
  & AUC & DP & AUC & DP & AUC & DP & AUC & DP & AUC & DP & AUC \\
\hline\hline
ATOM~\cite{Danelljan_2019} & 0.67 & 0.87 & {\color{blue} \textit{0.60}} & {\color{blue} \textit{0.81}} & {\color{red} \textbf{0.62}} & {\color{red} \textbf{0.82}} & 0.58 & 0.69 & {\color{red} \textbf{0.51}} & {\color{blue} \textit{0.51}} & {\color{red} \textbf{0.52}} \\
SiamRPN++~\cite{Li_2019} & {\color{red} \textbf{0.69}} & {\color{red} \textbf{0.90}} & 0.58 & 0.77 & {\color{blue} \textit{0.60}} & 0.80 & {\color{red} \textbf{0.60}} & {\color{red} \textbf{0.74}} & 0.49 & 0.51 & {\color{blue} \textit{0.50}} \\
SPM~\cite{Wang_2019} & 0.67 & 0.87 & 0.58 & 0.79 & 0.59 & 0.77 & 0.57 & 0.67 & 0.47 & 0.48 & 0.48 \\
MemTrack~\cite{Yang_2018} & 0.63 & 0.82 & 0.54 & 0.74 & 0.49 & 0.70 & 0.50 & 0.61 & 0.34 & 0.35 & 0.39 \\
SiamFC~\cite{bertinetto2016fully} & 0.59 & 0.78 & 0.52 & 0.70 & 0.51 & 0.74 & 0.51 & 0.60 & 0.35 & 0.36 & 0.33 \\
Staple~\cite{Bertinetto_2016} & 0.60 & 0.79 & 0.51 & 0.68 & 0.45 & 0.64 & 0.41 & 0.48 & 0.24 & 0.23 & 0.30 \\
\hdashline
HDT*~\cite{Qi2019HedgingDF} & - & 0.78 & - & 0.68 & - & 0.67 & - & 0.50 & - & 0.28 & - \\
MCCT~\cite{Wang_2018} & 0.60 & 0.78 & 0.50 & 0.66 & 0.53 & 0.71 & 0.53 & 0.65 & 0.36 & 0.38 & 0.33 \\
Random & 0.64 & 0.84 & 0.55 & 0.75 & 0.54 & 0.74 & 0.53 & 0.63 & 0.40 & 0.41 & 0.42 \\
Max & 0.65 & 0.84 & 0.56 & 0.76 & 0.55 & 0.76 & 0.57 & 0.68 & 0.40 & 0.42 & 0.43 \\
AAA(Proposed) & {\color{blue} \textit{0.68}} & {\color{blue} \textit{0.89}} & {\color{red} \textbf{0.62}} & {\color{red} \textbf{0.83}} & 0.60 & {\color{blue} \textit{0.81}} & {\color{blue} \textit{0.59}} & {\color{blue} \textit{0.71}} & {\color{blue} \textit{0.51}} & {\color{red} \textbf{0.53}} & 0.49 \\
\hline
\end{tabular}
    \begin{tablenotes}
      \item{$\cdot$}   See the notes below Table~\ref{table:high_score} for the details.
    \end{tablenotes}
\end{threeparttable}
\end{table*}

\subsubsection{Hyper-parameter search}\label{sec:threshold}
The threshold $\theta$, which is just one hyper-parameter in the proposed method, is optimized using the GOT10K~\cite{Huang_2019}, Generic Object Tracking Benchmark.
Specifically, the AUC score of AAA with the expert group is first evaluated by changing $\theta$ from 0.6 to 0.9 at 0.01 intervals with GOT10K. Then, the $\theta$ for the highest AUC score is chosen for evaluation of the other datasets. Appendix~\ref{sec:append_threshold} details the procedure.
It should be noted that GOT10K was not used in the performance evaluation experiments described below or in training of individual experts.

\subsection{Quantitative evaluation using three expert groups with different performance}
Comprehensive experiments were conducted to determine whether the proposed method can be applied to properly aggregate various experts and achieve near-best performance (i.e., the performance similar to that of the best expert) without attentive expert selection. This section outlines quantitative evaluation conducted using three expert groups (referred to as \textit{High}, \textit{Low}, and \textit{Mix}) with different performance characteristics as follows:
\begin{itemize}
    \item \textit{High} group consists of six higher-performance experts (ATOM, DaSiamRPN, SiamMCF, SiamRPN++, SPM, and THOR). 
    \item \textit{Low} group consists of six lower-performance experts (GradNet, MemTrack, SiamDW, SiamFC, SiamRPN, and Staple), and was examined to determine whether the proposed method can be applied to follow the best experts among lower-performance experts.
    \item \textit{Mix} group consists of the three higher-performance experts (ATOM, SiamRPN++, and SPM) and the three lower-performance experts (MemTrack, SiamFC, and Staple). Observation of the proposed method with this is importance in verifying that the method supports automatic selection of higher-performance experts while helping to eliminate erroneous estimations from lower-performance experts. 
\end{itemize}
\par
Table~\ref{table:high_score} shows the average performance of the six experts in the \textit{High} group and their aggregations. As noted, AUC and DP are derived
from the success plot and the precision plot as shown in Fig.~\ref{fig:high_curve}. 
AAA outperformed state-of-the-art trackers in most datasets and demonstrated the best performance for all datasets except for UAV123, and even then was only marginally second best. 
\par
Fig.~\ref{fig:rank}~(a) shows an image sequence-level rank histogram for six trackers in the \textit{High} group and AAA. The histogram is normalized based on the number of image sequences in each dataset. As seen in Fig.~\ref{fig:hard_tracking}, no tracker consistently achieved the best performance, and the best expert drastically changed over image sequences even within the same dataset. As also shown in Fig.~\ref{fig:rank}~(a), even a very good tracker (such as SiamRPN++) is sometimes worst-ranked (i.e., 7th). In contrast, AAA was the second- or third-best tracker for most image sequences, although it was not often the best. More importantly, it was rarely ranked lowly. These experimental highlight the importance of a regret bound guaranteeing a minimal difference between performance of the proposed method and the best expert.\par

Table~\ref{table:high_score} also shows that the performance of the other aggregation-based trackers (HDT*, MCCT, Random, and Max) was lower than that of AAA despite their aggregation of the same experts. As detailed in Appendix~\ref{sec:append_hdt} and Appendix~\ref{sec:append_mcct}, HDT* and MCCT did not show ideal performance in the study's stringent experimental setup with aggregation of arbitrary experts. Max, which relies on the feedback given for each frame, sometimes demonstrated the worst performance. 

\begin{table*}[!t]
\centering
\begin{threeparttable}
\caption{Tracking Accuracy by \textit{SiamDW} Group\vspace{-2mm}}
\label{table:siamdw_score}
\begin{tabular}{c|c|c|c|c|c|c|c|c|c|c|c}
\hline
\multirow{2}{*}{Tracker} & \multicolumn{2}{c|}{OTB2015} & \multicolumn{2}{c|}{TColor128} & \multicolumn{2}{c|}{UAV123} & \multicolumn{2}{c|}{NFS} & \multicolumn{2}{c|}{LaSOT} & VOT2018 \\
  & AUC & DP & AUC & DP & AUC & DP & AUC & DP & AUC & DP & AUC \\
\hline\hline
SiamDW/SiamFCRes22/OTB & 0.64 & 0.84 & {\color{blue} \textit{0.58}} & {\color{blue} \textit{0.79}} & 0.51 & 0.73 & 0.52 & 0.64 & {\color{blue} \textit{0.38}} & {\color{blue} \textit{0.39}} & 0.38 \\
SiamDW/SiamFCIncep22/OTB & 0.61 & 0.81 & 0.55 & 0.76 & 0.50 & 0.72 & 0.51 & 0.64 & 0.36 & 0.38 & 0.35 \\
SiamDW/SiamFCNext22/OTB & 0.62 & 0.82 & 0.57 & 0.76 & 0.49 & 0.71 & 0.51 & 0.63 & 0.37 & 0.38 & 0.32 \\
SiamDW/SiamRPNRes22/OTB & {\color{red} \textbf{0.67}} & {\color{red} \textbf{0.91}} & 0.53 & 0.75 & 0.46 & 0.69 & 0.50 & 0.63 & 0.35 & 0.34 & 0.37 \\
SiamDW/SiamFCRes22/VOT & 0.63 & 0.84 & 0.56 & 0.77 & 0.51 & 0.73 & 0.52 & 0.65 & 0.36 & 0.37 & 0.37 \\
SiamDW/SiamFCIncep22/VOT & 0.60 & 0.80 & 0.54 & 0.75 & 0.50 & 0.73 & 0.49 & 0.61 & 0.35 & 0.36 & 0.35 \\
SiamDW/SiamFCNext22/VOT & 0.61 & 0.81 & 0.54 & 0.74 & 0.49 & 0.72 & 0.51 & 0.63 & 0.35 & 0.38 & 0.34 \\
SiamDW/SiamRPNRes22/VOT & 0.66 & {\color{blue} \textit{0.90}} & 0.53 & 0.74 & 0.46 & 0.69 & 0.51 & {\color{blue} \textit{0.66}} & 0.35 & 0.35 & {\color{red} \textbf{0.43}} \\
\hdashline
HDT*~\cite{Qi2019HedgingDF} & - & 0.81 & - & 0.70 & - & 0.67 & - & 0.57 & - & 0.30 & - \\
MCCT~\cite{Wang_2018} & 0.63 & 0.83 & 0.54 & 0.74 & 0.50 & 0.71 & 0.51 & 0.65 & 0.34 & 0.36 & 0.34 \\
Random & 0.63 & 0.84 & 0.55 & 0.76 & 0.49 & 0.71 & 0.51 & 0.64 & 0.36 & 0.37 & 0.37 \\
Max & 0.64 & 0.86 & 0.55 & 0.76 & {\color{blue} \textit{0.51}} & {\color{blue} \textit{0.74}} & {\color{blue} \textit{0.53}} & 0.66 & 0.36 & 0.36 & 0.37 \\
AAA(Proposed) & {\color{blue} \textit{0.66}} & 0.88 & {\color{red} \textbf{0.60}} & {\color{red} \textbf{0.82}} & {\color{red} \textbf{0.52}} & {\color{red} \textbf{0.75}} & {\color{red} \textbf{0.55}} & {\color{red} \textbf{0.68}} & {\color{red} \textbf{0.42}} & {\color{red} \textbf{0.43}} & {\color{blue} \textit{0.42}} \\
\hline
\end{tabular}
\begin{tablenotes}
      \item{$\cdot$}  Depending on the backbone network used and what dataset it is designed for, the names of experts are written as follows: ``base algorithm''/``backbone network''/``target benchmark''. The authors of SiamDW and SiamRPN++ propose different hyper-parameters according to the target benchmark even for the same backbone network.
      \item{$\cdot$}  See the notes below Table~\ref{table:high_score} for the other details. 
    \end{tablenotes}
\end{threeparttable}
\end{table*}

\begin{table*}[!t]
\centering
\begin{threeparttable}
\caption{Tracking Accuracy by \textit{SiamRPN++} Group\vspace{-2mm}}
\label{table:siamrpn_score}
\begin{tabular}{c|c|c|c|c|c|c|c|c|c|c|c}
\hline
\multirow{2}{*}{Tracker} & \multicolumn{2}{c|}{OTB2015} & \multicolumn{2}{c|}{TColor128} & \multicolumn{2}{c|}{UAV123} & \multicolumn{2}{c|}{NFS} & \multicolumn{2}{c|}{LaSOT} & VOT2018 \\
  & AUC & DP & AUC & DP & AUC & DP & AUC & DP & AUC & DP & AUC \\
\hline\hline
SiamRPN++/AlexNet/VOT & 0.66 & 0.87 & 0.57 & 0.77 & 0.58 & 0.77 & 0.54 & 0.65 & 0.45 & 0.45 & 0.47 \\
SiamRPN++/AlexNet/OTB & 0.66 & 0.86 & 0.55 & 0.75 & 0.58 & 0.78 & 0.54 & 0.66 & 0.43 & 0.43 & 0.45 \\
SiamRPN++/ResNet-50/VOT & 0.65 & 0.86 & 0.56 & 0.75 & 0.61 & 0.80 & 0.58 & 0.71 & 0.50 & 0.51 & 0.51 \\
SiamRPN++/ResNet-50/OTB & {\color{red} \textbf{0.69}} & {\color{red} \textbf{0.90}} & {\color{blue} \textit{0.58}} & 0.77 & 0.60 & 0.80 & 0.60 & {\color{blue} \textit{0.74}} & 0.49 & 0.51 & 0.50 \\
SiamRPN++/ResNet-50/VOTLT & 0.63 & 0.84 & 0.58 & {\color{blue} \textit{0.79}} & 0.61 & {\color{blue} \textit{0.81}} & 0.56 & 0.68 & {\color{blue} \textit{0.52}} & {\color{blue} \textit{0.54}} & {\color{blue} \textit{0.51}} \\
SiamRPN++/MobileNetV2/VOT & 0.65 & 0.86 & 0.56 & 0.76 & 0.60 & 0.79 & 0.57 & 0.70 & 0.45 & 0.46 & 0.50 \\
SiamRPN++/SiamMask/VOT & 0.65 & 0.85 & 0.54 & 0.73 & 0.60 & 0.80 & 0.58 & 0.72 & 0.47 & 0.48 & 0.48 \\
\hdashline
HDT*~\cite{Qi2019HedgingDF} & - & 0.80 & - & 0.68 & - & 0.71 & - & 0.59 & - & 0.38 & - \\
MCCT~\cite{Wang_2018} & 0.64 & 0.84 & 0.55 & 0.75 & {\color{blue} \textit{0.61}} & 0.81 & 0.59 & 0.72 & 0.48 & 0.50 & 0.45 \\
Random & 0.66 & 0.86 & 0.56 & 0.76 & 0.60 & 0.79 & 0.57 & 0.69 & 0.47 & 0.48 & 0.49 \\
Max & 0.66 & 0.87 & 0.56 & 0.76 & 0.60 & 0.80 & {\color{blue} \textit{0.60}} & 0.73 & 0.47 & 0.48 & 0.50 \\
AAA(Proposed) & {\color{blue} \textit{0.68}} & {\color{blue} \textit{0.89}} & {\color{red} \textbf{0.61}} & {\color{red} \textbf{0.83}} & {\color{red} \textbf{0.64}} & {\color{red} \textbf{0.85}} & {\color{red} \textbf{0.61}} & {\color{red} \textbf{0.74}} & {\color{red} \textbf{0.54}} & {\color{red} \textbf{0.56}} & {\color{red} \textbf{0.52}} \\
\hline
\end{tabular}
\begin{tablenotes}
      \item{$\cdot$}  ``VOTLT'' is a dataset consisting of relatively long-term image sequences rather than VOT~\cite{Kristan_2016}.
     \item{$\cdot$} See the notes below Table~\ref{table:high_score} for the other details. 
    \end{tablenotes}
\end{threeparttable}
\end{table*}

Table~\ref{table:low_score} and Fig.~\ref{fig:rank}~(b) show quantitative evaluation of the trackers in the \textit{Low} group and their aggregations. AAA again demonstrated at least the second-best average performance for all datasets even with aggregation of experts in the \textit{Low} group. This means that AAA automatically finds and follows the best among lower-performance experts as expected. It should be emphasized that the other aggregation strategies suffered from lower expert performance.\par

As shown in Table~\ref{table:mix_score} and Fig.~\ref{fig:rank}~(c), AAA was at least the third-best of all trackers with experts in the \textit{Mix} group, and there were significant performance gaps between the three higher- and lower-performance experts in this group. Here, a selection of lower-performance experts drastically degrades aggregation performance. AAA, however, automatically and successfully selected higher-performance experts and was the second- or third-best tracker for most image sequences. This indicates that AAA is not disturbed even when several experts do not perform well, making it a very practical aggregation-based tracker.

\begin{figure*}[!t]
\captionsetup{farskip=0pt}
\includegraphics[width=1\linewidth]{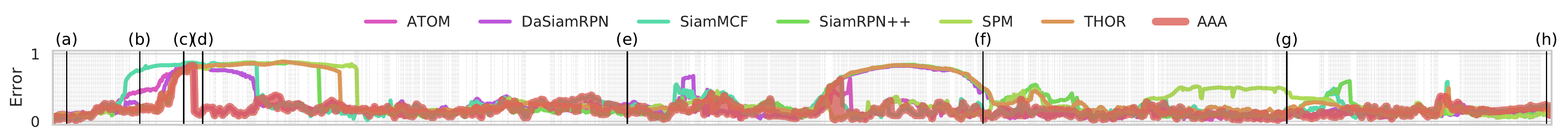}
\includegraphics[width=1\linewidth]{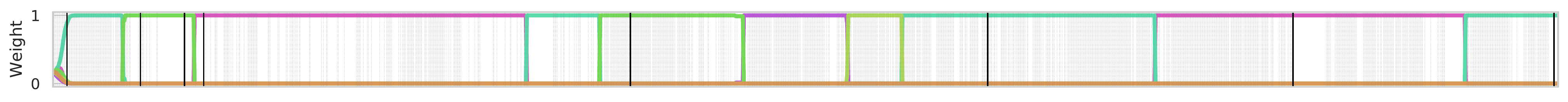}
\subfloat[Frame 14]{\label{fig:example-a}\includegraphics[width=0.125\linewidth]{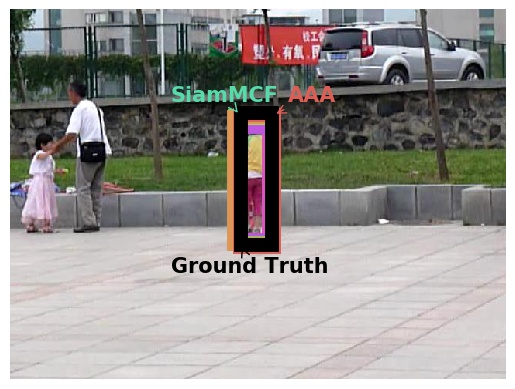}}
\subfloat[Frame 87]{\label{fig:example-b}\includegraphics[width=0.125\linewidth]{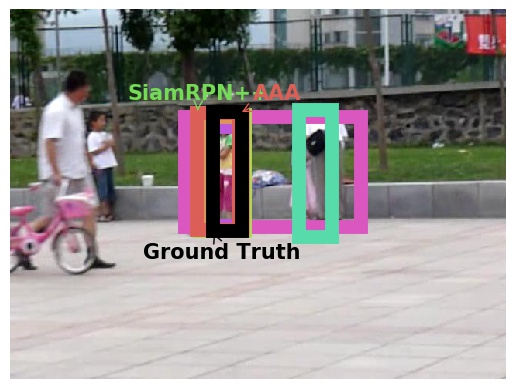}}
\subfloat[Frame 131]{\label{fig:example-c}\includegraphics[width=0.125\linewidth]{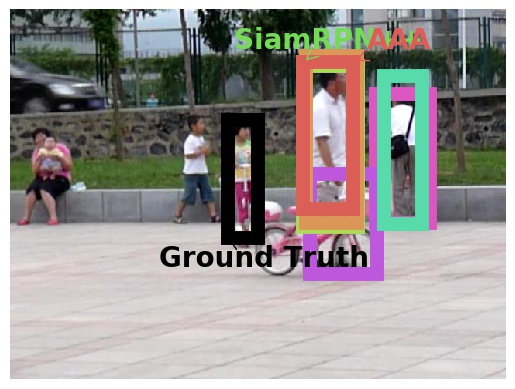}}
\subfloat[Frame 150]{\label{fig:example-d}\includegraphics[width=0.125\linewidth]{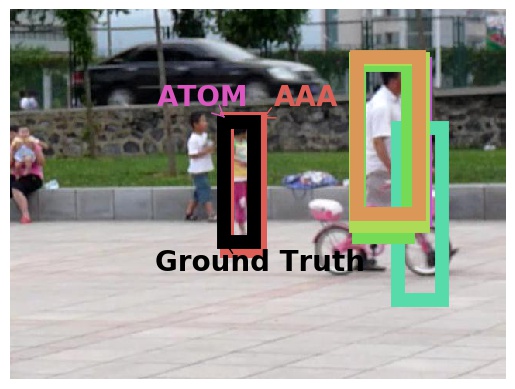}}
\subfloat[Frame 575]{\label{fig:example-e}\includegraphics[width=0.125\linewidth]{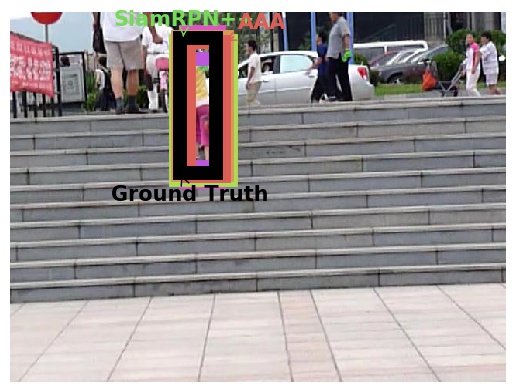}}
\subfloat[Frame 931]{\label{fig:example-f}\includegraphics[width=0.125\linewidth]{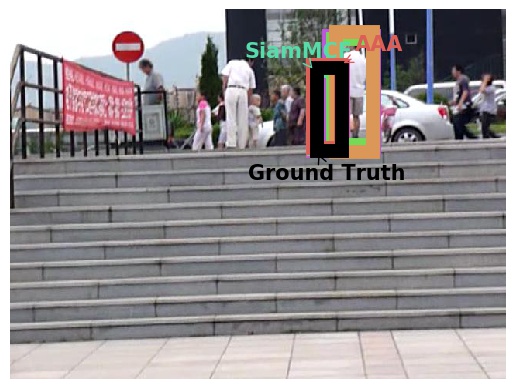}}
\subfloat[Frame 1235]{\label{fig:example-g}\includegraphics[width=0.125\linewidth]{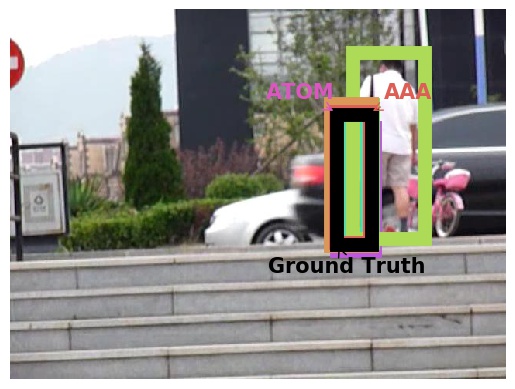}}
\subfloat[Frame 1495]{\label{fig:example-h}\includegraphics[width=0.125\linewidth]{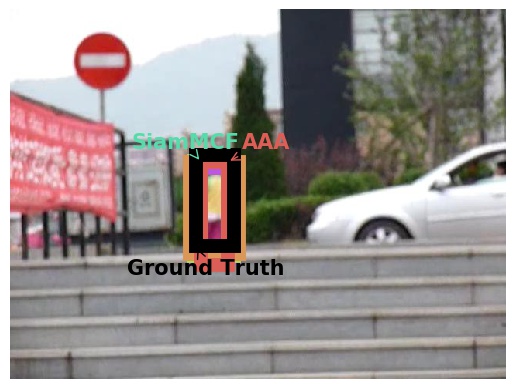}}
\vspace{5pt}
\hrule
\vspace{5pt}
\includegraphics[width=1\linewidth]{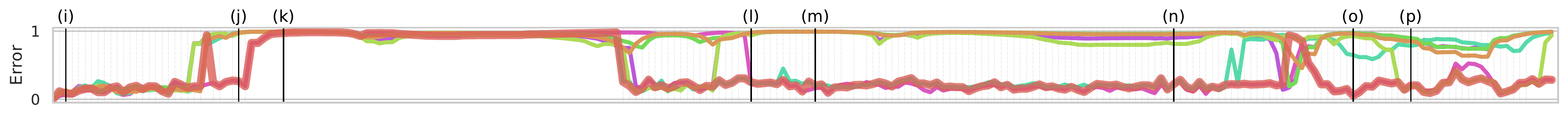}
\includegraphics[width=1\linewidth]{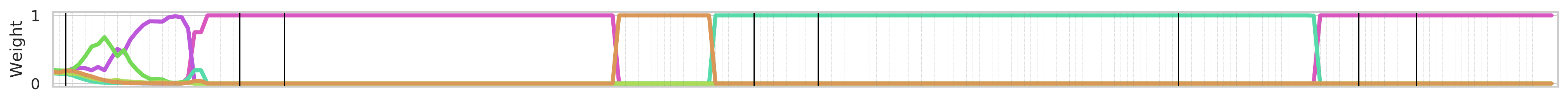}
\subfloat[Frame 2]{\label{fig:example-i}\includegraphics[width=0.125\linewidth]{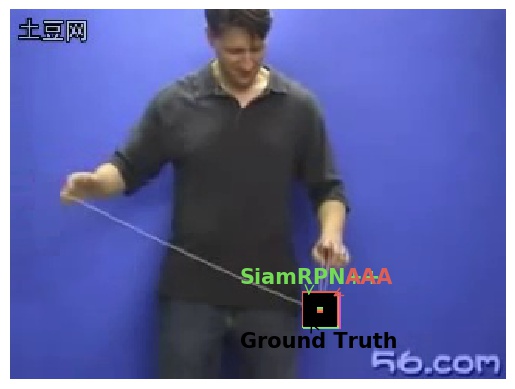}}
\subfloat[Frame 29]{\label{fig:example-j}\includegraphics[width=0.125\linewidth]{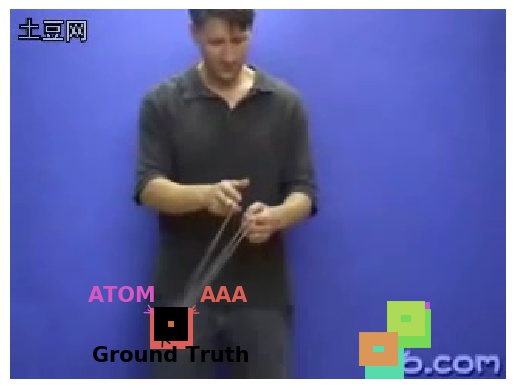}}
\subfloat[Frame 36]{\label{fig:example-k}\includegraphics[width=0.125\linewidth]{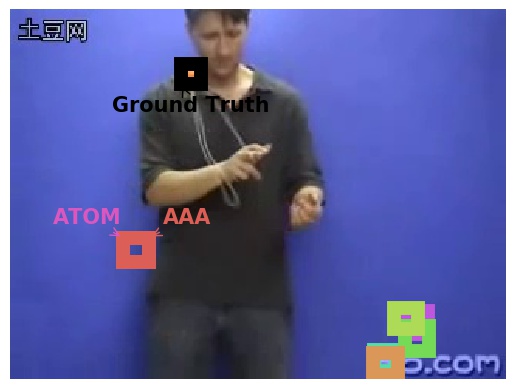}}
\subfloat[Frame 109]{\label{fig:example-l}\includegraphics[width=0.125\linewidth]{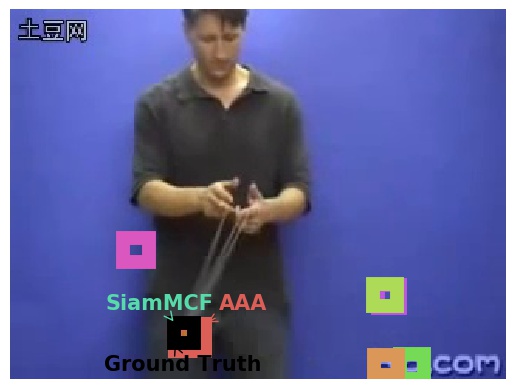}}
\subfloat[Frame 119]{\label{fig:example-m}\includegraphics[width=0.125\linewidth]{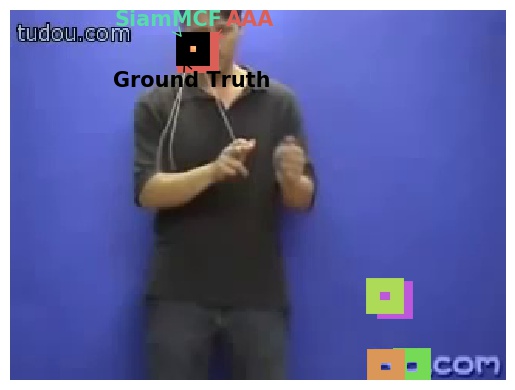}}
\subfloat[Frame 175]{\label{fig:example-n}\includegraphics[width=0.125\linewidth]{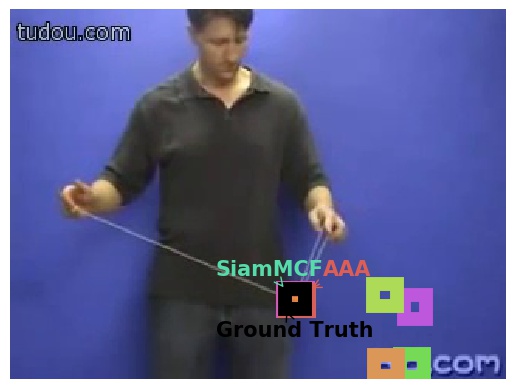}}
\subfloat[Frame 203]{\label{fig:example-o}\includegraphics[width=0.125\linewidth]{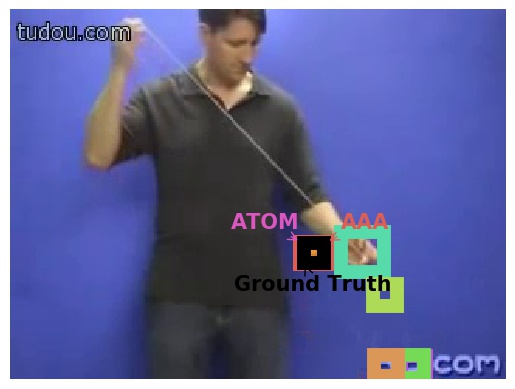}}
\subfloat[Frame 212]{\label{fig:example-p}\includegraphics[width=0.125\linewidth]{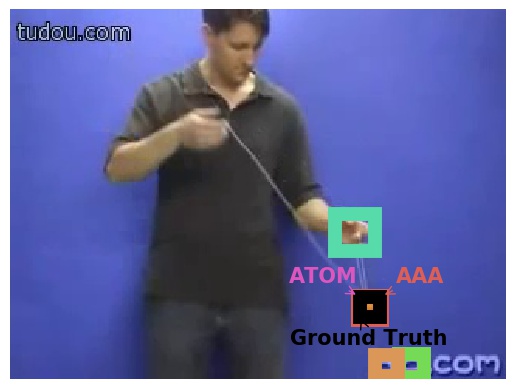}}
\caption{{\bf Tracking examples.} The top is ``Girl2'' in OTB2015 and the bottom is ``Yo-yos\_ce1'' in TColor128. 
For the picked-up frame,
we annotate the true target location (black), the proposed method (red), and the expert which has the highest weight.
In all the graphs, gray vertical lines indicate the location of the anchor frames.}
\label{fig:example}
\end{figure*}

\subsection{Quantitative evaluation with experts generated from a single tracking method}
As explained in \ref{sec:effect_of_N}, Theorem \ref{theo:regret_cdelay} guarantees that the use of more experts does not result in any significant degradation of AAA performance. One strategy here is to employ a wide variety of tracking methods based on different algorithms as per the experiments detailed in the previous section. However, due to the difficulty of arranging such a variety, it is preferable to employ a single tracking method and generate various versions thereof by changing its internal parameters, like ~\cite{Zhang_2019, Li_2019}.\par
For evaluation of AAA with the second strategy, different versions of the lower-performance SiamDW (collectively referred to as the \textit{SiamDW} group) were generated by changing the related parameter sets (e.g., backbone networks, the weight of the network, and hyper-parameters) based on the suggestions of the original paper~\cite{Zhang_2019}. The higher-performance tracker SiamRPN++ was also adapted to generate the \textit{SiamRPN++} group. \par
Tables~\ref{table:siamdw_score} and \ref{table:siamrpn_score} show the results from the \textit{SiamDW} and \textit{SiamRPN++} groups.
Even with experts generated from a single tracking method, AAA outperformed the individual experts in these groups, and the negative effect of increasing the number of experts $N$ on the regret bound was therefore not significant. These results are promising for practical application. Rather than focusing on the internal parameters of individual experts, it is simply necessary to have more experts generated with different internal parameters. 

\subsection{Tracking examples}
In Fig.~\ref{fig:example}, two tracking examples from the \textit{High} group are shown to demonstrate how AAA tries to track the best expert and achieve similar performance by updating the weights $w_i^t$ adaptively. Specifically, it shows change in the overlap errors (IOU) and the weights of experts and AAA for ``Girl2'' in OTB2015 and ``Yo-yos\_ce1'' in TColor128.\par
For ``Girl2'', all experts successfully tracked the target object at (a). However, because of the occlusion between (b) and (c), the experts failed to track at (c) and only ATOM properly tracked the target object at (d).
AAA also tracked the object well at (d), with an anchor frame being determined after the target object reappeared and high weight given to ATOM via appropriate feedback. At (e), (f) and (g), other experts were also able to properly track the target object, and AAA still achieved high performance. \par
For ``Yo-yos\_ce1'', tracker accuracy frequently varied throughout the sequence. For example, SiamMCF was accurate and ATOM was inaccurate at (l), but this situation was reversed at (o). Even here, AAA tried to follow the better tracker by changing the weights of the experts at the anchor frames, resulting in lower errors for most parts (except the period around (k), when all experts failed). AAA outperformed all the experts in this image sequence.

\section{When does AAA perform best on a dataset? -- A theoretical inspection\label{sec:average-performance}}
It is considered useful to know the conditions in which AAA outperforms experts as seen in the above experiments. However, Theorem \ref{theo:regret_cdelay} 
simply implies that the performance of AAA is similar to that of the best expert for an image sequence based on its regret bound. It does not indicate the conditions in which AAA outperforms experts for a certain sequence.\par
However, it is still possible to determine the conditions in which AAA outperforms experts {\em on average over an image sequence set $\mathcal{V}$}. The four notations are used to indicate these conditions. First, $\overline{R}^\mathcal{V}$ denotes the average regret of AAA over $\mathcal{V}$.
Second, {\em the overall-best expert $i^*$} is the expert whose total loss (or, equivalently, average loss) over $\mathcal{V}$ is the minimum among the $N$ experts, that is:
\begin{equation}
    i^* = \argmin_{i\in[1,N]} \sum_{v\in \mathcal{V}}\sum_{t=1}^T\ell( f_{i}^{v,t}, y^{v,t}), 
    \label{eq:overall-best}
\end{equation}
where a new suffix $v$ is attached to $f_i^t$ and $y_i^t$. Third,  $\mathcal{S}\subset\mathcal{V}$ is the set of image sequences where the overall-best expert is the best expert. Finally, a value $\delta$ is defined as:
\begin{equation}
\delta = \min_{v\in\mathcal{V}\setminus\mathcal{S}}\left[\sum_{t=1}^T\ell( f_{i^*}^{v,t}, y^{v,t})-\min_i\sum_{t=1}^T\ell(f_i^{v,t}, y^{v,t})\right].
\label{eq:delta}
\end{equation}
In this definition, for the sequence $v\in\mathcal{V}\setminus\mathcal{S}$ (i.e., the sequence $v$ for which the overall-best expert is not the best expert), the overall-best expert performs worse than the best expert with a loss value of $\delta$ or more.\par
From the proof given in Appendix~\ref{sec:append_prop}, the following proposition holds:
\begin{prop}\label{prop:ours}
AAA outperforms all the experts on average over the image sequence set ${\mathcal{V}}$ if the following condition is satisfied:
\begin{equation}
    \overline{R}^\mathcal{V} \leq \frac{|\mathcal{V}\setminus\mathcal{S}|}{|\mathcal{V}|}\delta.
    \label{eq:prop_cond}
\end{equation}
\end{prop}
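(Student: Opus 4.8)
The plan is to reduce the claim that AAA outperforms \emph{all} experts on average to a single comparison against the overall-best expert $i^*$, and then to bound the resulting gap by splitting $\mathcal{V}$ into $\mathcal{S}$ and $\mathcal{V}\setminus\mathcal{S}$. The key observation is that, by its definition in (\ref{eq:overall-best}), $i^*$ minimizes the total (hence average) loss over the $N$ experts; therefore it suffices to show that the average loss of AAA does not exceed that of $i^*$, since beating $i^*$ automatically beats every other expert.

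First I would fix per-sequence notation. For each $v\in\mathcal{V}$ let $L_i^v=\sum_{t=1}^T\ell(f_i^{v,t},y^{v,t})$ be the total loss of expert $i$, let $\min_i L_i^v$ be the best-expert loss on $v$, and let $L_{\mathrm{AAA}}^v=\mathbb{E}\bigl[\sum_{t=1}^T\ell(p^{v,t},y^{v,t})\bigr]$ be the expected total loss of AAA on $v$. From the definition of the per-sequence regret and of the average regret $\overline{R}^\mathcal{V}$, summing over $v$ gives $\sum_{v\in\mathcal{V}}L_{\mathrm{AAA}}^v=|\mathcal{V}|\,\overline{R}^\mathcal{V}+\sum_{v\in\mathcal{V}}\min_i L_i^v$.

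Next I would control the excess loss of $i^*$ over the per-sequence best expert, namely the quantity $\sum_{v\in\mathcal{V}}\bigl(L_{i^*}^v-\min_i L_i^v\bigr)$. For $v\in\mathcal{S}$ the overall-best expert \emph{is} the best expert, so each such summand is exactly zero. For $v\in\mathcal{V}\setminus\mathcal{S}$, the definition of $\delta$ in (\ref{eq:delta}) as the minimum of these gaps gives $L_{i^*}^v-\min_i L_i^v\ge\delta$. Summing the two cases yields $\sum_{v\in\mathcal{V}}L_{i^*}^v\ge\sum_{v\in\mathcal{V}}\min_i L_i^v+|\mathcal{V}\setminus\mathcal{S}|\,\delta$.

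Finally I would combine the two displays. AAA outperforms $i^*$ on average precisely when $\sum_{v\in\mathcal{V}}L_{\mathrm{AAA}}^v\le\sum_{v\in\mathcal{V}}L_{i^*}^v$; substituting the regret decomposition on the left and the lower bound on the right, the common $\sum_{v\in\mathcal{V}}\min_i L_i^v$ term cancels and leaves the sufficient condition $|\mathcal{V}|\,\overline{R}^\mathcal{V}\le|\mathcal{V}\setminus\mathcal{S}|\,\delta$, which is exactly (\ref{eq:prop_cond}) after dividing by $|\mathcal{V}|$. I do not anticipate a genuine analytical obstacle: the argument is elementary once the reduction to $i^*$ is in place. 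The only real care lies in the bookkeeping of the definitions---recognizing that the $\mathcal{S}$-part of the excess sum vanishes identically while $\delta$ bounds each $\mathcal{V}\setminus\mathcal{S}$ summand from below---so the main ``difficulty'' is conceptual rather than computational.
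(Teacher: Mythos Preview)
Your proposal is correct and follows essentially the same route as the paper's own proof: reduce to the comparison with $i^*$, write the AAA side via the regret identity $\sum_v L_{\mathrm{AAA}}^v=|\mathcal{V}|\,\overline{R}^{\mathcal{V}}+\sum_v\min_i L_i^v$, split the $i^*$-excess $\sum_v(L_{i^*}^v-\min_i L_i^v)$ into the $\mathcal{S}$ part (identically zero) and the $\mathcal{V}\setminus\mathcal{S}$ part (each term $\ge\delta$), and cancel. Your explicit remark that beating $i^*$ suffices to beat every expert is a helpful clarification the paper leaves implicit, but the argument is otherwise the same.
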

\par
This proposition states that AAA performs better on average (i.e., $\overline{R}^\mathcal{V}$ is smaller) when it satisfies the condition (\ref{eq:prop_cond}) with a smaller $\mathcal{S}$ and/or a larger $\delta$.
The set $\mathcal{S}$ is smaller if the overall-best expert is the best expert for only a smaller number of sequences. The difference $\delta$ is larger if the performance of the overall-best expert degrades drastically at 
$v\in\mathcal{V}\setminus\mathcal{S}$.\par
From these discussions it can be concluded that {\em when there is no almighty tracker} (i.e., where $\mathcal{S}$ is small), AAA performs better than all experts over $\mathcal{V}$ based on appropriate aggregation. A large $\delta$ also indicates that AAA is {\em better with employment of various experts}, each of which can be an outstanding expert for certain sequences in $\mathcal{V}$

\section{Conclusion}
This paper proposes a tracking method referred to as the Adaptive Aggregation of Arbitrary trackers (AAA) for robust online tracking. The performance of individual trackers varies significantly with different image sequences, creating variations in simple aggregation strategies. The proposed AAA is based on adaptive expert aggregation (AEA), which demonstrates strong theoretical support in terms of ``regret'', with a theoretically bounded performance difference between AAA and the best tracker (referred to as the best expert). It should be emphasized that the best tracker for an image sequence is identified at the end of the sequence. This means that it is unknown which tracker will be the best when AAA aggregates the trackers for each frame; nevertheless, this theoretical support guarantees that the performance of AAA will be close to that of the best tracker.\par
An exhaustive experimental study on the large variations of benchmark datasets and trackers to be aggregated demonstrated that the proposed method provides state-of-the-art performance. As a theoretical guarantee, AAA performed similar to or better than the best tracker for each image sequence, and often outperformed trackers on average over a benchmark dataset. We also derived a condition that AAA becomes the best tracker on average.\par
Future work will focus on extension to multiple object tracking. It might seem straightforward to aggregate multiple object trackers, but is in fact challenging because it is not obvious how we determine anchor frames, define delayed feedback and design a new loss function. Other potentially worthwhile work involves the development of a methodology for organizing expert sets that give better performance with the proposed aggregation for a certain dataset. 

\appendices

\begin{figure}[!t]
\centering
\begin{tabular}{ccc}
\includegraphics[width=0.45\linewidth]{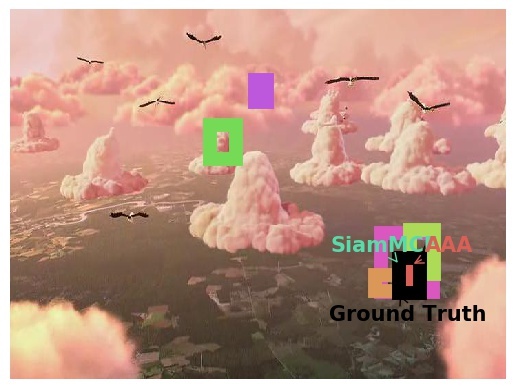} &
\includegraphics[width=0.45\linewidth]{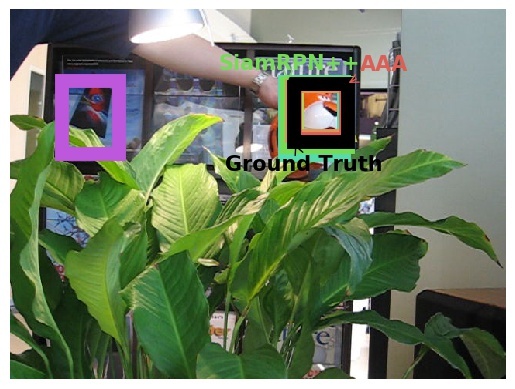} \\
\hline
\includegraphics[width=0.45\linewidth]{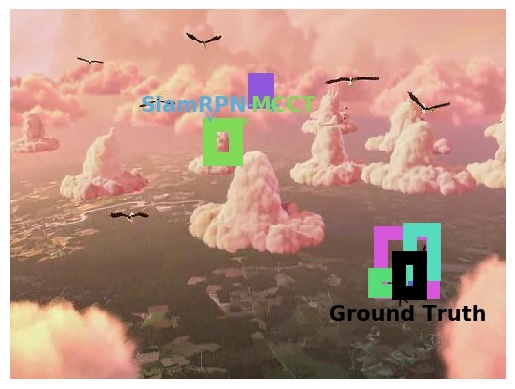} &
\includegraphics[width=0.45\linewidth]{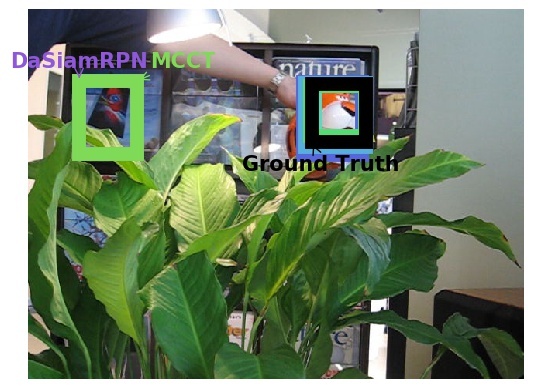}
\end{tabular}
\caption{{\bf Tracking example of (top) the proposed method and (bottom) MCCT.} We annotate the true target location (black), the proposed method and MCCT (red). The expert which has the highest weight is also annotated.} 
\label{fig:mcct_compare}
\end{figure}

\begin{figure}[!t]
\centering
\begin{tabular}{ccc}
\includegraphics[width=0.45\linewidth]{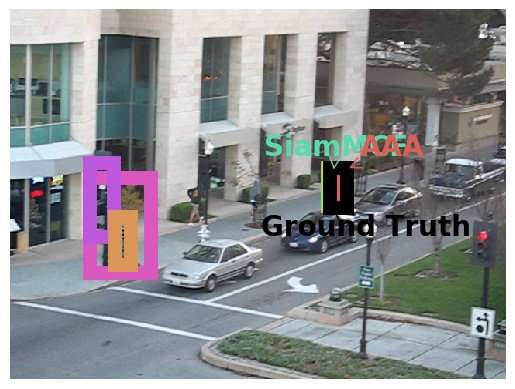} &
\includegraphics[width=0.45\linewidth]{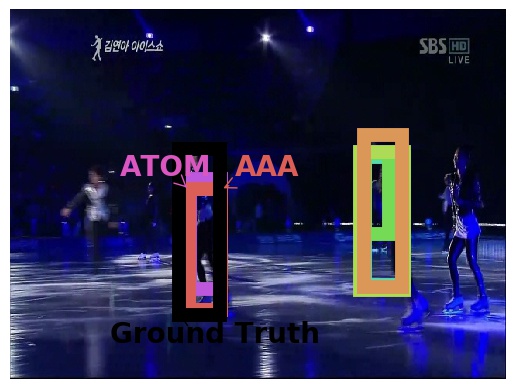} \\
\hline
\includegraphics[width=0.45\linewidth]{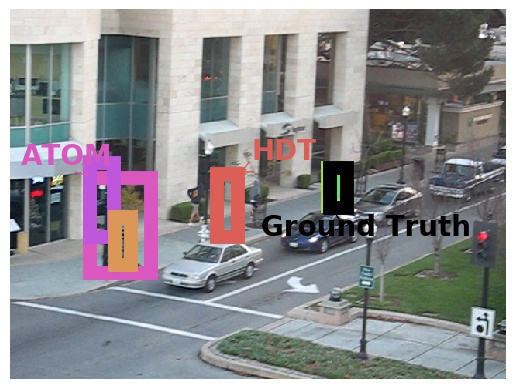} &
\includegraphics[width=0.45\linewidth]{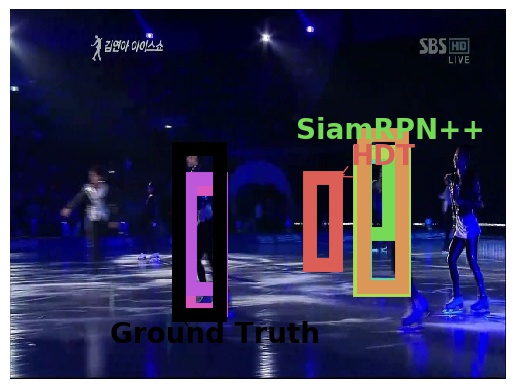}
\end{tabular}
\caption{{\bf Tracking example of (top) the proposed method and (bottom) HDT.} We annotate the true target location (black), the proposed method and HDT (red). The expert which has the highest weight is also annotated.} 
\label{fig:hdt_compare}
\end{figure}

\begin{figure*}[!t]
\centering
\includegraphics[width=1\linewidth]{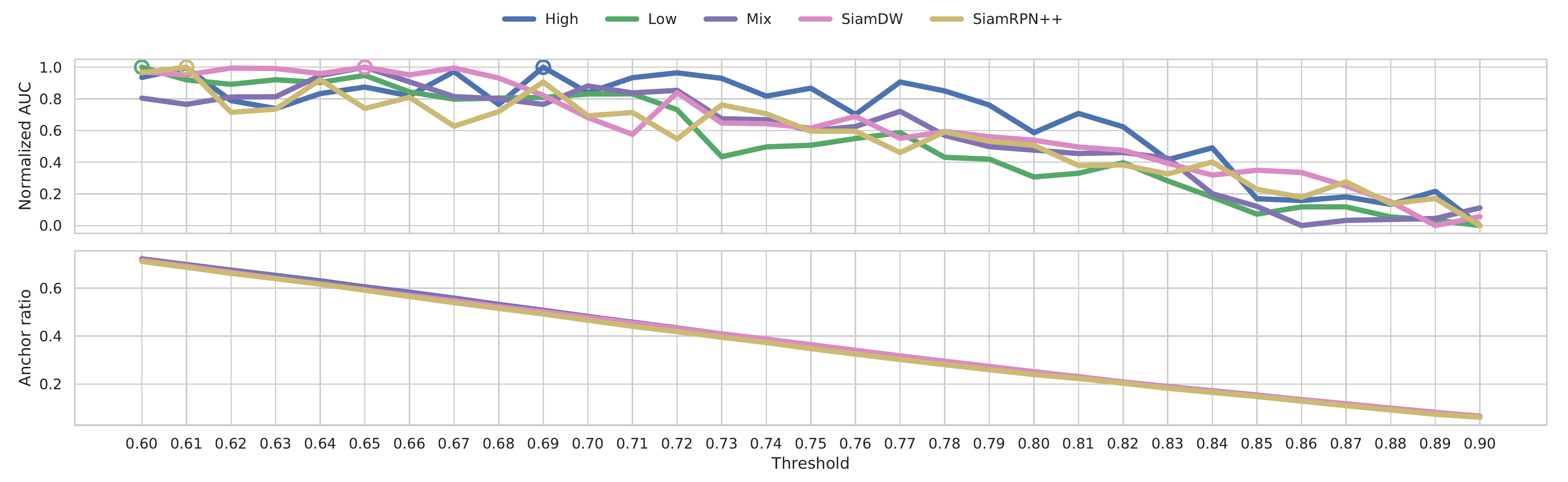}
\caption{\textbf{(top) Normalized AUC score and (bottom) anchor ratio for thresholds $\theta$.} For each comparison group, the AUC scores are normalized by min-max normalization. The threshold with the highest AUC score (i.e., 1) is indicated by a circle.}
\label{fig:theta}
\end{figure*}

\begin{table*}[!t]
\centering
\begin{threeparttable}
\caption{Tracking Accuracy at Anchor Frames by  \textit{High} Group\vspace{-2mm}}
\label{table:high_anchor}
\begin{tabular}{c|c|c|c|c|c|c|c|c|c|c|c}
\hline
\multirow{2}{*}{Tracker} & \multicolumn{2}{c|}{OTB2015} & \multicolumn{2}{c|}{TColor128} & \multicolumn{2}{c|}{UAV123} & \multicolumn{2}{c|}{NFS} & \multicolumn{2}{c|}{LaSOT} & VOT2018 \\
  & AUC & DP & AUC & DP & AUC & DP & AUC & DP & AUC & DP & AUC \\
\hline\hline
ATOM & 0.72 & 0.93 & {\color{blue} \textit{0.66}} & 0.88 & {\color{blue} \textit{0.71}} & {\color{blue} \textit{0.91}} & 0.66 & 0.80 & {\color{blue} \textit{0.66}} & {\color{blue} \textit{0.70}} & {\color{blue} \textit{0.61}} \\
DaSiamRPN & 0.70 & 0.92 & 0.61 & 0.84 & 0.68 & 0.88 & 0.64 & 0.79 & 0.59 & 0.62 & 0.57 \\
SiamMCF & 0.71 & 0.91 & 0.66 & 0.88 & 0.66 & 0.88 & 0.67 & 0.83 & 0.60 & 0.64 & 0.57 \\
SiamRPN++ & {\color{blue} \textit{0.73}} & {\color{blue} \textit{0.94}} & 0.64 & 0.85 & 0.70 & 0.89 & {\color{blue} \textit{0.68}} & {\color{blue} \textit{0.84}} & 0.64 & 0.69 & 0.60 \\
SPM & 0.72 & 0.92 & 0.65 & {\color{blue} \textit{0.89}} & 0.69 & 0.87 & 0.65 & 0.78 & 0.63 & 0.67 & 0.60 \\
THOR & 0.68 & 0.90 & 0.59 & 0.79 & 0.67 & 0.87 & 0.64 & 0.79 & 0.55 & 0.58 & 0.59 \\
\hdashline
AAA(Proposed) & {\color{red} \textbf{0.74}} & {\color{red} \textbf{0.94}} & {\color{red} \textbf{0.70}} & {\color{red} \textbf{0.92}} & {\color{red} \textbf{0.72}} & {\color{red} \textbf{0.93}} & {\color{red} \textbf{0.70}} & {\color{red} \textbf{0.87}} & {\color{red} \textbf{0.70}} & {\color{red} \textbf{0.76}} & {\color{red} \textbf{0.64}} \\
\hline
\end{tabular}
\end{threeparttable}
\end{table*}

\section{MCCT details\texorpdfstring{{}~\cite{Wang_2018}}{}\label{sec:append_mcct}}
An MCCT (Multi-Cue Correlation Tracker)~\cite{Wang_2018} evaluates and weights experts for every frame, and simply selects the one with the highest weight to determine the target location. For each expert, the MCCT evaluates the overlap ratio of bounding boxes from other experts and the variance of the overlap ratio in a short period. The expert with smaller variance is evaluated as more stable and therefore more reliable.
\par
However, with the employment of arbitrary experts, the MCCT may select a failed expert due to the nature of the above evaluation. More specifically, some experts that have already lost the target location will constantly have a zero overlap ratio and therefore zero variance; see Fig.~\ref{fig:mcct_compare} for examples. It can be seen that MCCT selects failed experts that maintain a zero overlap ratio. The authors' experiments revealed numerous such cases, and MCCT therefore demonstrated lower performance than other methods. 
Accordingly, to fully utilize the potential of MCCT, it is necessary to carefully choose experts with bounding boxes that tend to be close together. In fact, the study reported in the original paper~\cite{Wang_2018} used specific experts with adaptive updating to allow ROI sharing.

\section{HDT* details\texorpdfstring{~\cite{Qi2019HedgingDF}}{}\label{sec:append_hdt}}
This section outlines the implementation of HDT*~\cite{Qi2019HedgingDF} and related issues encountered in the study's experimental setup (with tough adversarial environments). As the current HDT* source code has not been published, the experiments here involved careful implementation based on the source code of the first version of HDT*~\cite{Qi_2016}. However, ideal implementation was challenging because HDT* uses its own Siamese networks to evaluate similarity between the template and the predicted target bounding box, and it implicitly assumes that expert predictions are based on the same bounding box size. This assumption conflicts with the study's experimental setup using arbitrary experts. To make the comparison as fair as possible in light of the above issues, the same criteria were used for similarity evaluation. Specifically, $V_T()$ from (\ref{eq:offline}) was used rather than the Siamese network in HDT* implementation. In addition, as the same experts were used for HDT*, AUC-based evaluation for HDT* was omitted due to related influence from bounding box sizes. \par
HDT* evaluates experts by the following two criteria given for every frame. The first criterion is the difference in appearance between the template image of the target and the cropped image for the predicted location by an expert. The second is the location difference between expert prediction and feedback (i.e., the weighted average of expert prediction). Reliable evaluation using the first criterion is challenging when the target may be occluded and/or shows heavy deformation. Moreover, since the feedback in the second criterion is directly determined via expert prediction (rather than from more reliable information), it is often unreliable in tough tracking situations. \par
Fig.~\ref{fig:hdt_compare} shows two typical cases of HDT* failure. Since feedback is the weighted average of expert prediction (rather than another more reliable pseudo-ground-truth, like input from an offline tracker), performance will degrade when the weight of the wrong expert is high. In addition, the final location is determined via Hedge's prediction, and is sometimes distant from all other expert predictions.

\section{Offline tracker details\label{sec:append_offline}}
In the study's experiments, an offline tracker based on Dijkstra's algorithm was used although any accurate offline tracker providing the delayed feedback can be applied. First, a graph linking two consecutive anchor frames, $u_q$ and $u_{q+1}$, was produced with node content corresponding to one of $f^{u_q} \cup\{f_i^\tau | \tau\in[u_q+1, u_{q+1}]\}$, where $f^{u_q}$ is the target location based on AAA at the previous anchor frame $u_q$, and $f_i^\tau$ is the target location based on the $i$th expert at $\tau$. The edges of the graph were assigned between nodes $f^{\tau-1}_i$ and $f^\tau_j$ with cost $C\left(f^{\tau-1}_i, f^\tau_j\right)$. According to~\cite{Li_Zhang_2008}, the cost was defined as
\begin{equation}
   C\left(f^{\tau-1}_i, f^\tau_j\right) = -\log \left(\mathcal{P}(f^{\tau-1}_i, f^\tau_j) V_E(f^{\tau-1}_i, f^\tau_j) 
   V_T(f^\tau_j)\right), \label{eq:offline}
 \end{equation}
where $\mathcal{P} \in [0,1]$ is the value of GIoU between two bounding boxes 
and $V_E \in [0,1]$ is the normalized cosine similarity between the feature vectors for the bounding box regions. The feature vectors are given by ResNet, as per the anchor frame determination in Sec.~\ref{sec:anchor}.
$V_T \in [0,1]$ is the normalized cosine similarity to the feature vector of the given template image.
The globally minimum cost path between $f^{u_q}$ and one of $\{f^{u_{q+1}}_i\}$ is determined using Dijkstra's algorithm, and the node sequence of the path gives the pseudo-ground-truth $y^\tau, \tau\in [u_{q-1}+1, u_q]$ as delayed feedback. 

\section{Derivation of Theorem \ref{theo:regret_delay}\label{sec:append_regret}}
Quanrud et al.\cite{Quanrud2015OnlineLW} gives the regret bound for the {\em Online Mirror Descent (OMD) algorithm} (a versatile choice for AEA) with delayed feedback. Various AEA algorithms can be derived from OMD by changing its regularizer for the expert selection process. Based on Theorem A.5 of \cite{Quanrud2015OnlineLW}, the regret bound of OMD with delayed feedback is given as
$$
R_T = O \left( \frac { 1 } { \eta }\psi + \eta \frac { G ^ { 2 } ( T + D ) } { \sigma } \right),
$$
where the values $\psi$ and $\sigma$ are determined by the regularizer choice, $\eta$ is the learning rate and $G$ is the difference between the maximum and minimum values of a loss function.\par
The authours' expert selection algorithm based on weights (\ref{eq:weight}) is also a special case of OMD. According to \cite{Quanrud2015OnlineLW}, if an entropic regularizer is applied, weight can be derived by updating the (\ref{eq:weight}) scheme. Moreover, $\sigma$ is 1 and $\psi$ is upper-bounded by $O(\ln N)$~\cite{Quanrud2015OnlineLW}.
The loss function (\ref{eq:loss}) takes values in $[0,1]$, and therefore $G=1$. Based on these values and the assumption that $\eta\propto\sqrt{\ln N/(T+D)}$, Theorem~\ref{theo:regret_delay} is supported.

\section{Determination of the anchor frame threshold \texorpdfstring{$\theta$}{}\label{sec:append_threshold}}
The hyper-parameter $\theta$, which is the threshold used to determine the anchor frame, was determined experimentally using the GOT10K dataset as noted in Sec.\ref{sec:threshold}.
Fig.~\ref{fig:theta} shows the AUC score and the anchor frame ratio based on changing the hyper-parameter $\theta$ from 0.6 to 0.9 at intervals of 0.1. Here, AUC is normalized to the range between $[0,1]$ for easier observation. As a general trend, a smaller threshold value results in a high anchor ratio and, in turn, a higher AUC. However, detailed observation reveals that this trend does not always hold because anchor frames determined using lower values of $\theta$
are less reliable and will result in inaccurate feedback. Consequently, smaller threshold values do not always give better performance.
Specifically, the best $\theta$ values for each expert group was as follows: \textit{High}:0.69; \textit{Low}:0.60; \textit{Mix}:0.65; \textit{SiamDW}:0.65; and \textit{SiamRPN++}:0.61.
\par
Evaluation was also performed to determine whether AAA can identify the target location with high confidence at anchor frames. AAA accuracy with experts in the \textit{High} group is shown in Table~\ref{table:high_anchor}, in addition to the accuracy of individual experts. The hyper-parameter $\theta$ was set at 0.69, as indicated by Fig.~\ref{fig:theta}. AAA achieved the best AUC score for all datasets. This proves that the proposed method can be applied to determine anchor frames appropriately.

\section{Proof of Proposition \ref{prop:ours}\label{sec:append_prop}}
\begin{proof}
If tracking performance evaluation is based on loss $\ell$ (rather than AUC or DP), the situation in which AAA outperforms experts on average over $\mathcal{V}$ is expressed as the inequality
\begin{equation}
\sum_{v\in \mathcal{V}}\sum_{t=1}^T\ell(p^{v,t}, y^{v,t}) \leq 
\sum_{v\in \mathcal{V}}\sum_{t=1}^T\ell( f_{i^*}^{v,t}, y^{v,t}),  
\label{eq:average-inequality}
\end{equation}
where $p^{v,t}$ is the result of location identification using AAA at $t$ for the sequence $v\in V$ and $y^{v,t}$ is the pseudo-ground-truth given as delayed feedback. As defined in Sec.~\ref{sec:average-performance}, the $i^*$th expert is the overall-best expert having the most optimal average performance over $\mathcal{V}$. 
The left side of (\ref{eq:average-inequality}) is relative to the loss of AAA over $\mathcal{V}$. The right side is relative to the loss of the  overall-best expert over $\mathcal{V}$, as defined by (\ref{eq:overall-best}). 
Accordingly, the inequality (\ref{eq:average-inequality}) indicates the situation in which AAA outperforms all experts on average over $\mathcal{V}$.\par
From (\ref{eq:average-inequality}), the following inequality is derived: 
\begin{eqnarray}
\sum_{v\in \mathcal{V}}\left[\sum_{t=1}^T\ell(p^{v,t}, y^{v,t})-\min_i\sum_{t=1}^T\ell(f_i^{v,t}, y^{v,t})\right]\nonumber \\
\leq 
\sum_{v\in \mathcal{V}}\left[\sum_{t=1}^T\ell( f_{i^*}^{v,t} y^{v,t})-\min_i\sum_{t=1}^T\ell(f_i^{v,t}, y^{v,t})\right].\label{eq:pp0}
\end{eqnarray}
The left side is the sum of AAA regrets (see (\ref{eq:regret})) over $\mathcal{V}$, and is therefore equal to $|\mathcal{V}|\overline{R}^\mathcal{V}$. The right side of (\ref{eq:pp0}) can be decomposed into two terms by splitting $\mathcal{V}$ into $\mathcal{S}$ and $\mathcal{V}\setminus\mathcal{S}$. Then, the first term is
\begin{equation}
\sum_{v\in \mathcal{S}}\left[\sum_{t=1}^T\ell( f_{i^*}^{v,t}, y^{v,t})-\min_i\sum_{t=1}^T\ell(f_i^{v,t}, y^{v,t})\right]=0, \label{eq:pp1}
\end{equation}
because the overall-best (i.e., the $i^*$th) expert is the best expert in the image sequence $v\in\mathcal{S}$.
By the definition of $\delta$ in (\ref{eq:prop_cond}), the second term is
\begin{equation}
\sum_{v\in \mathcal{V}\setminus\mathcal{S}}\left[\sum_{t=1}^T\ell( f_{i^*}^{v,t}, y^{v,t})-\min_i\sum_{t=1}^T\ell(f_i^{v,t}, y^{v,t})\right] \geq |\mathcal{V}\setminus\mathcal{S}|\delta.
\label{eq:pp2}
\end{equation}
From (\ref{eq:pp1}) and (\ref{eq:pp2}), we have: \begin{equation}
|\mathcal{V}\setminus\mathcal{S}|\delta \leq \sum_{v\in \mathcal{V}}\left[\sum_{t=1}^T\ell( f_{i^*}^{v,t}, y^{v,t})-\min_i\sum_{t=1}^T\ell(f_i^{v,t}, y^{v,t})\right].\label{eq:ppR}
\end{equation}
\par
From (\ref{eq:pp0}) and (\ref{eq:ppR}), {\em if the condition} $|\mathcal{V}|\overline{R}^\mathcal{V}\leq$ $|\mathcal{V}\setminus\mathcal{S}|\delta$ {\em is satisfied}, this gives \begin{eqnarray*}
|\mathcal{V}|\overline{R}^\mathcal{V}
&=&
\sum_{v\in \mathcal{V}}\left[\sum_{t=1}^T\ell(p^{v,t}, y^{v,t})-\min_i\sum_{t=1}^T\ell(f_i^{v,t}, y^{v,t})\right] \\
&\leq& |\mathcal{V}\setminus\mathcal{S}|\delta \\
&\leq& \sum_{v\in \mathcal{V}}\left[\sum_{t=1}^T\ell( f_{i^*}^{v,t} y^{v,t})-\min_i\sum_{t=1}^T\ell(f_i^{v,t}, y^{v,t})\right].
\end{eqnarray*}
This means that if the condition holds, 
(\ref{eq:pp0}) holds and immediately
(\ref{eq:average-inequality}) holds. Consequently, AAA outperforms other experts on average over $\mathcal{V}$ if $|\mathcal{V}|\overline{R}^\mathcal{V}\leq |\mathcal{V}\setminus\mathcal{S}|\delta$, and  Proposition \ref{prop:ours} is derived.
\end{proof}

\ifCLASSOPTIONcompsoc
  \section*{Acknowledgments}
\else
  \section*{Acknowledgment}
\fi

This work was supported by JSPS KAKENHI Grant Number JP17H06100 and JP18K18001.

\ifCLASSOPTIONcaptionsoff
  \newpage
\fi

\bibliographystyle{IEEEtran}
\bibliography{citation}

\begin{IEEEbiographynophoto}{Heon Song}
received B.E. degree from Kyushu University in 2019. He joined Graduate School of Information Science and Electrical Engineering at Kyushu University in 2019, where he is currently a M.S. candidate under the supervision of Prof. Seiichi Uchida. His research interests include object tracking and machine learning.
\end{IEEEbiographynophoto}

\begin{IEEEbiographynophoto}{Daiiki Suehiro}
received Ph.D. from Kyushu University in 2014. Currently, he is an assistant professor at Department of Advanced Information Technology, Information Science and Electrical Engineering in Kyushu University and he is also the team member of Computational Learning Theory Team, RIKEN Center for Advanced Intelligence Project. His research interests include machine machine learning theory, data mining, and time-series analysis.
\end{IEEEbiographynophoto}

\begin{IEEEbiographynophoto}{Seiichi Uchida}
received B.E. and M.E. and Dr. Eng. degrees from Kyushu University in 1990, 1992 and 1999, respectively. From 1992 to 1996, he joined SECOM Co., Ltd., Japan. Currently, he is a distinguished professor at Kyushu University. His research interests include pattern recognition and image processing. He received 2007 IAPR/ICDAR Best Paper Award, 2010, ICFHR Best Paper Award, and many domestic awards.
\end{IEEEbiographynophoto}

\end{document}